\documentclass{article} 
\usepackage{iclr2020_conference,times}


\usepackage{hyperref}
\usepackage{url}

\usepackage{tikz}
\usetikzlibrary{fit,backgrounds,arrows,decorations.markings,arrows.meta}

\usepackage{graphicx}
\usepackage{bm}
\usepackage[utf8]{inputenc} 
\usepackage[T1]{fontenc}    
\usepackage{hyperref}       
\usepackage{url}            
\usepackage{booktabs}       
\usepackage{amsfonts,amsmath}       
\usepackage{nicefrac}       
\usepackage{microtype}      
\usepackage{multirow}
\usepackage{algorithm}
\usepackage{algpseudocode}
\usepackage{mathtools}
\usepackage[inline]{enumitem}
\usepackage{pgfplots}
\usepackage{relsize}

\usepgfplotslibrary{groupplots}

\usetikzlibrary{matrix}
\usepackage{amsthm,bbm}


\usepackage{subcaption}
\usepackage{ctable}

\usepackage[symbol]{footmisc}

\DeclarePairedDelimiterX\set[1]\lbrace\rbrace{\def\given{\;\delimsize\vert\;}#1}

\newtheorem{theorem}{Theorem}[section]

\newtheorem{proposition}[theorem]{Proposition}

\newcommand{\trans}{\mathsf{T}}

\newcommand{\graphsaint}{{\fontfamily{lmtt}\selectfont GraphSAINT}}
\newcommand{\saint}{{\fontfamily{lmtt}\selectfont SAINT}}

\newcommand{\sample}{{\fontfamily{lmtt}\selectfont SAMPLE}}
\newcommand{\G}[1][]{
    \ifthenelse{\equal{#1}{}}
    {\mathcal{G}}
    {\mathcal{G}}_{#1}}
\newcommand{\V}[1][]{
    \ifthenelse{\equal{#1}{}}
    {\mathcal{V}}
    {\mathcal{V}_{#1}}}
\newcommand{\E}[1][]{
    \ifthenelse{\equal{#1}{}}
    {\mathcal{E}}
    {\mathcal{E}_{#1}}}
\newcommand{\X}[1][]{
    \ifthenelse{\equal{#1}{}}
    {\bm{X}}
    {\bm{X}^{\paren{#1}}}}
\newcommand{\Anorm}[1][]{
    \ifthenelse{\equal{#1}{}}
    {\widetilde{\bm{A}}}
    {\widetilde{\bm{A}}_{#1}}}

\newcommand{\W}[1][]{
    \ifthenelse{\equal{#1}{}}
    {\bm{W}}
    {\bm{W}^{\paren{#1}}}}
\newcommand{\norm}[1]{\left\lVert#1\right\rVert}

\newcommand{\size}[1]{\left\lvert #1 \right\rvert}
\newcommand{\paren}[1]{\left( #1 \right)}

\tikzset{
    vertex/.style = {
        circle,
        fill            = black,
        outer sep = 2pt,
        inner sep = 1pt,
    }
}

\title{\graphsaint: \underline{Graph Sa}mpling Based  \underline{In}ductive Learning Me\underline{t}hod}


%
\author{Hanqing Zeng\thanks{Equal contribution}\\
University of Southern California\\
\texttt{zengh@usc.edu}\\
\And
Hongkuan Zhou\footnotemark[1]\\
University of Southern California\\
\texttt{hongkuaz@usc.edu}\\
\And
Ajitesh Srivastava\\
University of Southern California\\
\texttt{ajiteshs@usc.edu}
\And
Rajgopal Kannan\\
US Army Research Lab\\
\texttt{rajgopal.kannan.civ@mail.mil}
\And
Viktor Prasanna\\
University of Southern California\\
\texttt{prasanna@usc.edu}}

\iclrfinalcopy 
\begin{document}

\maketitle

\begin{abstract}
Graph Convolutional Networks (GCNs) are powerful models for learning representations of attributed graphs. 
To scale GCNs to large graphs, state-of-the-art methods use various \emph{layer sampling} techniques to alleviate the ``neighbor explosion'' problem during minibatch training. 
We propose {\graphsaint}, a \emph{graph sampling} based inductive learning method that improves training efficiency and accuracy in a fundamentally different way. 
By changing perspective, {\graphsaint} constructs minibatches by sampling the training graph, rather than the nodes or edges across GCN layers. 
Each iteration, a complete GCN is built from the properly sampled subgraph. Thus, we ensure fixed number of well-connected nodes in all layers. 
We further propose normalization technique to eliminate bias, and sampling algorithms for variance reduction. 
Importantly, we can decouple the sampling from the forward and backward propagation, and extend {\graphsaint} with many architecture variants (e.g., graph attention, jumping connection).  
{\graphsaint} demonstrates superior performance in both accuracy and training time on five large graphs, and achieves new state-of-the-art F1 scores for PPI (0.995) and Reddit (0.970). 
\end{abstract}
\section{Introduction}

Recently, representation learning on graphs has attracted much attention, since it greatly facilitates tasks such as classification and clustering \citep{survey1,survey2}.
Current works on Graph Convolutional Networks (GCNs) \citep{graphsage,fastgcn,lgcn,as-gcn,s-gcn} mostly focus on shallow models (2 layers) on relatively small graphs. Scaling GCNs to larger datasets and deeper layers still requires fast alternate training methods.

In a GCN, data to be gathered for one output node comes from its neighbors in the previous layer. Each of these neighbors in turn, gathers its output from the previous layer, and so on. 
Clearly, the deeper we back track, the more multi-hop neighbors to support the computation of the root. The number of support nodes (and thus the training time) potentially grows exponentially with the GCN depth. 
To mitigate such ``neighbor explosion'', state-of-the-art methods 
use various \textit{layer sampling} techniques. The works by \cite{graphsage,gcn_web,s-gcn} ensure that only a small number of neighbors (typically from 2 to 50) are selected by one node in the next layer.
\cite{fastgcn} and \cite{as-gcn} further propose samplers to restrict the neighbor expansion factor to 1, by ensuring a fixed sample size in all layers. 
While these methods significantly speed up training, they face challenges in scalability, accuracy or computation complexity. 


\paragraph{Present work} We present {\graphsaint} (\underline{Graph SA}mpling based \underline{IN}ductive learning me\underline{T}hod) to efficiently train deep GCNs. 
{\graphsaint} is developed from a fundamentally different way of minibatch construction. Instead of building a GCN on the full training graph and then sampling across the layers, we sample the training graph first and then build a full GCN on the subgraph. Our method is thus \textit{graph sampling} based. 
Naturally, {\graphsaint} resolves ``neighbor explosion'', since every GCN of the minibatches is a small yet \textit{complete} one. 
On the other hand, graph sampling based method also brings new challenges in training. 
Intuitively, nodes of higher influence on each other should have higher probability to form a subgraph. This enables the sampled nodes to ``support'' each other without going outside the minibatch. 
Unfortunately, such strategy results in non-identical node sampling probability, and introduces bias in the minibatch estimator. 
To address this issue, we develop normalization techniques so that the feature learning does not give preference to nodes more frequently sampled. 
To further improve training quality, we perform variance reduction analysis, and design light-weight sampling algorithms by quantifying ``influence'' of neighbors. 
Experiments on {\graphsaint} using five large datasets show significant performance gain in both training accuracy and time. 
We also demonstrate the flexibility of {\graphsaint} by integrating our minibatch method with popular GCN architectures such as JK-net \citep{jk-net} and GAT \citep{gat}. 
The resulting deep models achieve new state-of-the-art F1 scores on PPI (0.995) and Reddit (0.970). 

\section{Related Work}


A neural network model that extends convolution operation to the graph domain is first proposed by \cite{gcn_lecun}. 
Further, \cite{gcn, gcn_nips16} speed up graph convolution computation with localized filters based on Chebyshev expansion. They target relatively small datasets and thus the training proceeds in full batch. In order to scale GCNs to large graphs, layer sampling techniques \citep{graphsage,fastgcn,gcn_web,s-gcn,lgcn,as-gcn} have been proposed for efficient minibatch training. All of them follow the three meta steps: 
\begin{enumerate*}
\item Construct a complete GCN on the full training graph. 
\item Sample nodes or edges of each layer to form minibatches. 
\item Propagate forward and backward among the sampled GCN. 
\end{enumerate*}
Steps (2) and (3) proceed iteratively to update the weights via stochastic gradient descent. 
The layer sampling algorithm of GraphSAGE \citep{graphsage} performs uniform node sampling on the previous layer neighbors. It enforces a pre-defined budget on the sample size, so as to bound the minibatch computation complexity. 
\cite{gcn_web} enhances the layer sampler of \cite{graphsage} by introducing an importance score to each neighbor. The algorithm presumably leads to less information loss due to weighted aggregation. 
S-GCN \citep{s-gcn}  further restricts neighborhood size by requiring only two support nodes in the previous layer. The idea is to use the historical activations in the previous layer to avoid redundant re-evaluation. 
FastGCN \citep{fastgcn} performs sampling from another perspective. Instead of tracking down the inter-layer connections, node sampling is performed independently for each layer. It applies importance sampling to reduce variance, and results in constant sample size in all layers. However, the minibatches potentially become too sparse to achieve high accuracy. \cite{as-gcn} improves FastGCN by an additional sampling neural network. It ensures high accuracy, since sampling is conditioned on the selected nodes in the next layer. Significant overhead may be incurred due to the expensive sampling algorithm and the extra sampler parameters to be learned. 

Instead of sampling layers, the works of \cite{ipdps_arxiv} and \cite{cluster-gcn} build minibatches from subgraphs. 
\cite{ipdps_arxiv} proposes a specific graph sampling algorithm to ensure connectivity among minibatch nodes. They further present techniques to scale such training on shared-memory multi-core platforms. 
More recently, ClusterGCN \citep{cluster-gcn} proposes graph clustering based minibatch training. 
During pre-processing, the training graph is partitioned into densely connected clusters. During training, clusters are randomly selected to form minibatches, and intra-cluster edge connections remain unchanged. 
Similar to {\graphsaint}, the works of \cite{ipdps_arxiv} and \cite{cluster-gcn} do not sample the layers and thus ``neighbor explosion'' is avoided. Unlike {\graphsaint}, both works are heuristic based, and do not account for bias due to the unequal probability of each node / edge appearing in a minibatch. 


Another line of research focuses on improving model capacity. Applying attention on graphs, the architectures of \cite{gat,gaan,graphstar} better capture neighbor features by dynamically adjusting edge weights. 
\cite{rw-gcn} combines PageRank with GCNs to enable efficient information propagation from many hops away. 
To develop deeper models, ``skip-connection'' is borrowed from CNNs \citep{resnet,densenet} into the GCN context. 
In particular, JK-net \cite{jk-net} demonstrates significant accuracy improvement on GCNs with more than two layers. Note, however, that JK-net \citep{jk-net} follows the same sampling strategy as GraphSAGE \citep{graphsage}. Thus, its training cost is high due to neighbor explosion. 
In addition, high order graph convolutional layers \citep{high-order-1,high-order-2,high-order-3} also help propagate long-distance features. 
With the numerous architectural variants developed, the question of how to train them efficiently via minibatches still remains to be answered.

\section{Proposed Method: {\graphsaint}}


Graph sampling based method is motivated by the challenges in scalability (in terms of model depth and graph size). We analyze the bias (Section \ref{sec: bias}) and variance (Section \ref{sec: var}) introduced by graph sampling, and thus, propose feasible sampling algorithms (Section \ref{sec: sampler}). 
We show the applicability of {\graphsaint} to other architectures, both conceptually (Section \ref{sec: discussion}) and experimentally (Section \ref{sec: exp extension}).

In the following, we define the problem of interest and the corresponding notations. 
A GCN learns representation of an un-directed, attributed graph $\G\paren{\V,\E}$, where each node $v\in\V$ has a length-$f$ attribute $\bm{x}_v$.
Let $\bm{A}$ be the adjacency matrix and $\Anorm$ be the normalized one (i.e., $\Anorm=\bm{D}^{-1}\bm{A}$, and $\bm{D}$ is the diagonal degree matrix). 
Let the dimension of layer-$\ell$ input activation be $f^{\paren{\ell}}$. The activation of node $v$ is $\bm{x}_v^{\paren{\ell}}\in\mathbb{R}^{f^{\paren{\ell}}}$, and the weight matrix is $\W[\ell]\in\mathbb{R}^{f^{\paren{\ell}}\times f^{\paren{\ell+1}}}$. Note that $\bm{x}_v=\bm{x}_v^{\paren{1}}$. 
Propagation rule of a layer is defined as follows:

\begin{equation}
\label{eq: vanilla gcn}
    \bm{x}_v^{\paren{\ell+1}} = \sigma\paren{\sum\limits_{u\in\V}\Anorm_{v,u}\paren{\W[\ell]}^{\trans} \bm{x}_u^{\paren{\ell}}}
\end{equation}

where $\Anorm_{v,u}$ is a scalar, taking an element of $\Anorm$. And $\sigma\paren{\cdot}$ is the activation function (e.g., ReLU). 

We use subscript ``s'' to denote parameterd of the sampled graph (e.g., $\G[s]$, $\V[s]$, $\E[s]$).

GCNs can be applied under inductive and transductive settings. While {\graphsaint} is applicable to both, in this paper, we focus on inductive learning. 
It has been shown that inductive learning is especially challenging \citep{graphsage} --- during training, neither attributes nor connections of the test nodes are present. Thus, an inductive model has to generalize to completely unseen graphs. 

\subsection{Minibatch by Graph Sampling\label{sec: gsaint}}

\begin{figure}[!htb]
    \centering

\definecolor{c0}{HTML}{f5f5f5}
\definecolor{c1}{HTML}{e0e0e0}
\definecolor{c2}{HTML}{9e9e9e}
\definecolor{c3}{HTML}{616161}
\definecolor{c4}{HTML}{424242}

\tikzset{
    archnode/.style={thick,circle,c4,draw=c4,fill=c0},
}
\begin{tikzpicture}[
outer/.style={draw=gray,dashed,fill=green!1,thick},
>={Stealth[inset=0pt,
length=4pt,angle'=45,round]},scale=1.,
sty edge sel/.style={ultra thick,color=c4},
sty node sel/.style={ultra thick,circle,c4,draw=c4,fill=c0,minimum size=\sizenode},
sty node unsel/.style={circle,c2,draw=c2,fill=c0,minimum size=\sizenode},
sty edge unsel/.style={color=c2},
]

\def\sizenode{10};

\node[sty node sel] (0) at (1,1) {0};
\node[sty node sel] (1) at (0,0) {1};
\node[sty node sel] (2) at (1,-1) {2};
\node[sty node sel] (3) at (2,0.2) {3};
\node[sty node unsel] (4) at (3.0,0.9) {4};
\node[sty node sel] (5) at (2.8,-0.7) {5};
\node[sty node unsel] (6) at (3.9,0.1) {6};
\node[sty node unsel] (8) at (4.9,0.8) {8};
\node[sty node unsel] (9) at (5.6,-0.2) {9};
\node[sty node sel] (7) at (4.5,-1) {7};

\def\gcnx{8.25};
\def\gcny{-1.1};
\def\offsetx{1};
\def\offsety{1};
\def\sizenodegcn{6};
\def\scalegcn{0.9};

\node[archnode,minimum size=\sizenodegcn,scale=\scalegcn] (g00) at (\gcnx,\gcny) {0};
\node[archnode,minimum size=\sizenodegcn,scale=\scalegcn] (g01) at (\gcnx+\offsetx,\gcny) {1};
\node[archnode,minimum size=\sizenodegcn,scale=\scalegcn] (g02) at (\gcnx+2*\offsetx,\gcny) {2};
\node[archnode,minimum size=\sizenodegcn,scale=\scalegcn] (g03) at (\gcnx+3*\offsetx,\gcny) {3};
\node[archnode,minimum size=\sizenodegcn,scale=\scalegcn] (g05) at (\gcnx+4*\offsetx,\gcny) {5};
\node[archnode,minimum size=\sizenodegcn,scale=\scalegcn] (g07) at (\gcnx+5*\offsetx,\gcny) {7};

\node[archnode,minimum size=\sizenodegcn,scale=\scalegcn] (g10) at (\gcnx,\gcny+\offsety) {0};
\node[archnode,minimum size=\sizenodegcn,scale=\scalegcn] (g11) at (\gcnx+\offsetx,\gcny+\offsety) {1};
\node[archnode,minimum size=\sizenodegcn,scale=\scalegcn] (g12) at (\gcnx+2*\offsetx,\gcny+\offsety) {2};
\node[archnode,minimum size=\sizenodegcn,scale=\scalegcn] (g13) at (\gcnx+3*\offsetx,\gcny+\offsety) {3};
\node[archnode,minimum size=\sizenodegcn,scale=\scalegcn] (g15) at (\gcnx+4*\offsetx,\gcny+\offsety) {5};
\node[archnode,minimum size=\sizenodegcn,scale=\scalegcn] (g17) at (\gcnx+5*\offsetx,\gcny+\offsety) {7};

\node[archnode,minimum size=\sizenodegcn,scale=\scalegcn] (g20) at (\gcnx,\gcny+2*\offsety) {0};
\node[archnode,minimum size=\sizenodegcn,scale=\scalegcn] (g21) at (\gcnx+\offsetx,\gcny+2*\offsety) {1};
\node[archnode,minimum size=\sizenodegcn,scale=\scalegcn] (g22) at (\gcnx+2*\offsetx,\gcny+2*\offsety) {2};
\node[archnode,minimum size=\sizenodegcn,scale=\scalegcn] (g23) at (\gcnx+3*\offsetx,\gcny+2*\offsety) {3};
\node[archnode,minimum size=\sizenodegcn,scale=\scalegcn] (g25) at (\gcnx+4*\offsetx,\gcny+2*\offsety) {5};
\node[archnode,minimum size=\sizenodegcn,scale=\scalegcn] (g27) at (\gcnx+5*\offsetx,\gcny+2*\offsety) {7};

\draw [-latex,c4,fill=c4] (g00.north) -- (g10.south);
\draw [-latex,c4,fill=c4] (g00.north) -- (g11.south);
\draw [-latex,c4,fill=c4] (g00.north) -- (g13.south);
\draw [-latex,c4,fill=c4] (g01.north) -- (g11.south);
\draw [-latex,c4,fill=c4] (g01.north) -- (g10.south);
\draw [-latex,c4,fill=c4] (g01.north) -- (g12.south);
\draw [-latex,c4,fill=c4] (g01.north) -- (g13.south);
\draw [-latex,c4,fill=c4] (g02.north) -- (g11.south);
\draw [-latex,c4,fill=c4] (g02.north) -- (g12.south);
\draw [-latex,c4,fill=c4] (g02.north) -- (g13.south);
\draw [-latex,c4,fill=c4] (g03.north) -- (g10.south);
\draw [-latex,c4,fill=c4] (g03.north) -- (g11.south);
\draw [-latex,c4,fill=c4] (g03.north) -- (g12.south);
\draw [-latex,c4,fill=c4] (g03.north) -- (g13.south);
\draw [-latex,c4,fill=c4] (g03.north) -- (g15.south);
\draw [-latex,c4,fill=c4] (g05.north) -- (g13.south);
\draw [-latex,c4,fill=c4] (g05.north) -- (g15.south);
\draw [-latex,c4,fill=c4] (g05.north) -- (g17.south);
\draw [-latex,c4,fill=c4] (g07.north) -- (g15.south);
\draw [-latex,c4,fill=c4] (g07.north) -- (g17.south);

\draw [-latex,c4,fill=c4] (g10.north) -- (g20.south);
\draw [-latex,c4,fill=c4] (g10.north) -- (g21.south);
\draw [-latex,c4,fill=c4] (g10.north) -- (g23.south);
\draw [-latex,c4,fill=c4] (g11.north) -- (g21.south);
\draw [-latex,c4,fill=c4] (g11.north) -- (g20.south);
\draw [-latex,c4,fill=c4] (g11.north) -- (g22.south);
\draw [-latex,c4,fill=c4] (g11.north) -- (g23.south);
\draw [-latex,c4,fill=c4] (g12.north) -- (g21.south);
\draw [-latex,c4,fill=c4] (g12.north) -- (g22.south);
\draw [-latex,c4,fill=c4] (g12.north) -- (g23.south);
\draw [-latex,c4,fill=c4] (g13.north) -- (g20.south);
\draw [-latex,c4,fill=c4] (g13.north) -- (g21.south);
\draw [-latex,c4,fill=c4] (g13.north) -- (g22.south);
\draw [-latex,c4,fill=c4] (g13.north) -- (g23.south);
\draw [-latex,c4,fill=c4] (g13.north) -- (g25.south);
\draw [-latex,c4,fill=c4] (g15.north) -- (g23.south);
\draw [-latex,c4,fill=c4] (g15.north) -- (g25.south);
\draw [-latex,c4,fill=c4] (g15.north) -- (g27.south);
\draw [-latex,c4,fill=c4] (g17.north) -- (g25.south);
\draw [-latex,c4,fill=c4] (g17.north) -- (g27.south);

\path
    (1) [bend left] edge [sty edge sel] node [right] {} (0)
    (0) [bend left] edge [sty edge sel] node [right] {} (3)
    (1) [bend right] edge [sty edge sel] node [right] {} (2)
    (2) [bend right] edge [sty edge sel] node [right] {} (3)
    (1) [bend right=10] edge [sty edge sel] node [right] {} (3)
    (3) [bend left] edge [sty edge unsel] node [right] {} (4)
    (4) [bend left=20] edge [sty edge unsel] node [right] {} (8)
    (8) [bend left] edge [sty edge unsel] node [right] {} (9)
    (7) [bend right] edge [sty edge unsel] node [right] {} (9)
    (3) [bend right] edge [sty edge sel] node [right] {} (5)
    (5) [bend right] edge [sty edge sel] node [right] {} (7)
    (4) [bend right] edge [sty edge unsel] node [right] {} (6)
    (6) [bend right] edge [sty edge unsel] node [right] {} (8)
    (8) [bend left=15] edge [sty edge unsel] node [right] {} (7);

\node[] at (2.8,-1.9) {$\G[s]=\mbox{\sample}(\G)$};
\node[] at (\gcnx+2.5*\offsetx,-1.9) {Full GCN on $\G[s]$};
\draw[-to,line width=2pt,c4] (6.9,0.1)+(170:0.75) arc(170:20:0.75);
\draw[-to,line width=2pt,c4] (6.9,-0.1)+(-10:0.75) arc(-10:-160:0.75);

\end{tikzpicture}
\caption{{\graphsaint} training algorithm}
    \label{fig: gsaint overall}
\end{figure}

\begin{algorithm}
\caption{{\graphsaint} training algorithm}
\label{algo: gsaint meta}
\begin{algorithmic}[1]
\renewcommand{\algorithmicrequire}{\textbf{Input:}}
\renewcommand{\algorithmicensure}{\textbf{Output:}}
\Require Training graph $\G\paren{\V,\E,\X}$; Labels $\overline{\bm{Y}}$; Sampler \sample;
\Ensure GCN model with trained weights
\State Pre-processing: Setup {\sample} parameters; Compute normalization coefficients $\alpha$, $\lambda$.
\For{each minibatch}
    \State $\G[s]\paren{\V[s],\E[s]} \gets$ Sampled sub-graph of $\G$ according to {\sample}
    \State GCN construction on $\G[s]$.
    \State $\set*{\bm{y}_v\given v\in\V[s]}\gets$ Forward propagation of $\set*{\bm{x}_v\given v\in \V[s]}$, normalized by $\alpha$
    \State Backward propagation from $\lambda$-normalized loss $L\paren{\bm{y}_v,\overline{\bm{y}}_v}$. Update weights. 
\EndFor
\end{algorithmic} 
\end{algorithm}

{\graphsaint} follows the design philosophy of directly sampling the training graph $\G$, rather than the corresponding GCN. 
Our goals are to 
\begin{enumerate*}
    \item extract appropriately connected subgraphs so that little information is lost when propagating within the subgraphs, and
    \item combine information of many subgraphs together so that the training process overall learns good representation of the full graph.
\end{enumerate*}

Figure \ref{fig: gsaint overall} and Algorithm \ref{algo: gsaint meta} illustrate the training algorithm. 
Before training starts, we perform light-weight pre-processing on $\G$ with the given sampler \sample. The pre-processing estimates the probability of a node $v\in\V$ and an edge $e\in\E$ being sampled by {\sample}. Such probability is later used to normalize  the subgraph neighbor aggregation and the minibatch loss (Section \ref{sec: bias}). Afterwards, training proceeds by iterative weight updates via SGD. 
Each iteration starts with an independently sampled $\G[s]$ (where $\size{\V[s]}\ll \size{\V}$). We then build a full GCN on $\G[s]$ to generate embedding and calculate loss for every $v\in \V[s]$. 
In Algorithm \ref{algo: gsaint meta}, node representation is learned by performing node classification in the supervised setting, and each training node $v$ comes with a ground truth label $\overline{\bm{y}}_v$. 

Intuitively, there are two requirements for {\sample}: 
\begin{enumerate*}
    \item Nodes having high influence on each other should be sampled in the same subgraph.
    \item Each edge should have non-negligible probability to be sampled. 
\end{enumerate*}
For requirement (1), an ideal {\sample} would consider the joint information from node connections as well as attributes. However, the resulting algorithm may have high complexity as it would need to infer the relationships between features. For simplicity, we define ``influence'' from the graph connectivity perspective and design topology based samplers. 
Requirement (2) leads to better generalization since it enables the neural net to explore the full feature and label space. 


\subsection{Normalization}
\label{sec: bias}



A sampler that preserves connectivity characteristic of $\G$ will almost inevitably introduce bias into minibatch estimation. 
In the following, we present normalization techniques to eliminate biases. 

Analysis of the complete multi-layer GCN is difficult due to non-linear activations. Thus, we analyze the embedding of each layer independently. 
This is similar to the treatment of layers independently by prior work \citep{fastgcn,as-gcn}.
Consider a layer-$\paren{\ell+1}$ node $v$ and a layer-$\ell$ node $u$. If $v$ is sampled (i.e., $v\in\V[s]$), we can compute the aggregated feature of $v$ as:
\begin{equation}
    \zeta_{v}^{(\ell+1)} = \sum\limits_{u\in\V}\frac{\Anorm_{v,u}}{\alpha_{u,v}}
    \paren{\W[\ell]}^{\trans} \bm{x}_u^{\paren{\ell}}\mathbbm{1}_{u|v} = 
    \sum\limits_{u\in\V}\frac{\Anorm_{v,u}}{\alpha_{u,v}} \Tilde{\bm{x}}_u^{\paren{\ell}} \mathbbm{1}_{u|v},
\end{equation}
where $\Tilde{\bm{x}}_u^{\paren{\ell}} = \paren{\W[\ell]}^{\trans}  \bm{x}_u^{\paren{\ell}}$, and $\mathbbm{1}_{u |v}\in\set{0,1}$ is the indicator function given $v$ is in the subgraph (i.e., $\mathbbm{1}_{u|v}=0$ if $v\in\V[s]\wedge \paren{u,v}\not\in\E[s]$; $\mathbbm{1}_{u|v}=1$ if $\paren{u,v}\in\E[s]$; $\mathbbm{1}_{u|v}$ not defined if $v\not\in\V[s]$). We refer to the constant $\alpha_{u,v}$ as \emph{aggregator normalization}.
Define $p_{u,v}=p_{v,u}$ as the probability of an edge $\paren{u,v}\in\E$ being sampled in a subgraph, and $p_v$ as the probability of a node $v\in\V$ being sampled.
\begin{proposition}
\label{prop: norm}
$\zeta_{v}^{(\ell+1)}$ is an unbiased estimator of the aggregation of $v$ in the full $\paren{\ell+1}^{\text{th}}$ GCN layer, if $\alpha_{u,v} = \frac{p_{u,v}}{p_v}$. i.e.,  
$
    \mathbb{E}\paren{\zeta_{v}^{(\ell+1)}} = \sum\limits_{u\in\V}\Anorm_{v,u} \Tilde{\bm{x}}_u^{\paren{\ell}}\nonumber
$.
\end{proposition}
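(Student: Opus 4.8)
The plan is to recognize that the expectation in the statement is implicitly \emph{conditional} on the event $v\in\V[s]$, since the indicator $\mathbbm{1}_{u|v}$ is only defined in that case. First I would fix the weight matrix $\W[\ell]$ and the layer-$\ell$ input activations, so that each $\Tilde{\bm{x}}_u^{\paren{\ell}}=\paren{\W[\ell]}^{\trans}\bm{x}_u^{\paren{\ell}}$ is a deterministic quantity; this is exactly the ``treat each layer independently'' assumption already invoked in the text, and it ensures that the only randomness in $\zeta_v^{(\ell+1)}$ comes from the sampler, through the indicators $\mathbbm{1}_{u|v}$.

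Next I would apply linearity of expectation to push the conditional expectation inside the finite sum over $u\in\V$, giving
\begin{equation}
\mathbb{E}\paren{\zeta_v^{(\ell+1)}\mid v\in\V[s]} = \sum_{u\in\V}\frac{\Anorm_{v,u}}{\alpha_{u,v}}\,\Tilde{\bm{x}}_u^{\paren{\ell}}\;\mathbb{E}\paren{\mathbbm{1}_{u|v}\mid v\in\V[s]}.
\end{equation}
The crux is then to evaluate $\mathbb{E}\paren{\mathbbm{1}_{u|v}\mid v\in\V[s]}$, which by definition of the indicator equals the conditional probability $\mathbb{P}\paren{\paren{u,v}\in\E[s]\mid v\in\V[s]}$ that the edge $\paren{u,v}$ survives in the subgraph given that $v$ was sampled.

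The key observation here is that an edge can only appear in the sampled subgraph when both of its endpoints do, so the event $\set{\paren{u,v}\in\E[s]}$ is contained in the event $\set{v\in\V[s]}$. Hence the joint probability in the numerator collapses to the edge probability itself, $\mathbb{P}\paren{\paren{u,v}\in\E[s]\wedge v\in\V[s]}=\mathbb{P}\paren{\paren{u,v}\in\E[s]}=p_{u,v}$, while the denominator is $\mathbb{P}\paren{v\in\V[s]}=p_v$. This yields $\mathbb{E}\paren{\mathbbm{1}_{u|v}\mid v\in\V[s]}=p_{u,v}/p_v=\alpha_{u,v}$, and substituting back cancels the factor $\alpha_{u,v}$ against the $1/\alpha_{u,v}$ in each summand, leaving $\sum_{u\in\V}\Anorm_{v,u}\,\Tilde{\bm{x}}_u^{\paren{\ell}}$ exactly as claimed.

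I expect the main obstacle to be conceptual rather than computational: getting the conditioning right. The estimator is only meaningful on the event $v\in\V[s]$, and the entire argument hinges on justifying that edge-in-subgraph implies node-in-subgraph, which is what makes the ratio defining $\alpha_{u,v}$ come out cleanly. A secondary point worth stating explicitly is the layer-independence assumption that permits treating $\Tilde{\bm{x}}_u^{\paren{\ell}}$ as constant; without it the indicator $\mathbbm{1}_{u|v}$ and the activation would be correlated and the expectation would not factor as above.
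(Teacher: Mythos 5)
Your proposal is correct and follows essentially the same route as the paper's proof: condition on $v\in\V[s]$, apply linearity of expectation, identify $\mathbb{E}\paren{\mathbbm{1}_{u|v}}$ with the conditional edge probability $p_{u,v}/p_v$, and cancel against $\alpha_{u,v}$. Your explicit justification that the event $\paren{u,v}\in\E[s]$ is contained in the event $v\in\V[s]$ is a point the paper leaves implicit, but it is the same argument.
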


Assuming that each layer independently learns an embedding, we use Proposition~\ref{prop: norm} to normalize feature propagation of each layer of the GCN built by \graphsaint.
Further, let $L_v$ be the loss on $v$ in the output layer. The minibatch loss is calculated as $L_{\text{batch}} = \sum_{v \in \G[s]} L_v/\lambda_v$, where $\lambda_v$ is a constant that we term \emph{loss normalization}. We set $\lambda_v = \size{\V}\cdot p_v$ so that:
\begin{equation}
    \mathbb{E}\paren{L_{\text{batch}}} = \frac{1}{\size{\mathbb{G}}}\sum_{\G[s]\in\mathbb{G}}\sum_{v\in\V[s]} \frac{L_v}{\lambda_v} = \frac{1}{\size{\V}}\sum_{v\in\V} L_v.
\end{equation}

Feature propagation within subgraphs thus requires normalization factors $\alpha$ and $\lambda$, which are computed based on the edge and node probability $p_{u,v}$, $p_v$. 
In the case of random node or random edge samplers, $p_{u,v}$ and $p_v$ can be derived analytically. For other samplers in general, closed form expression is hard to obtain. Thus, we perform pre-processing for estimation. 
Before training starts, we run the sampler repeatedly to obtain a set of $N$ subgraphs $\mathbb{G}$. We setup a counter $C_v$ and $C_{u,v}$ for each $v\in\V$ and $\paren{u,v}\in\E$, to count the number of times the node or edge appears in the subgraphs of $\mathbb{G}$. 
Then we set $\alpha_{u,v}=\frac{C_{u,v}}{C_v}=\frac{C_{v,u}}{C_v}$ and $\lambda_v=\frac{C_v}{N}$.
The subgraphs $\G[s]\in\mathbb{G}$ can all be reused as minibatches during training. Thus, the overhead of pre-processing is small (see Appendix \ref{appendix: sampler cost}). 





\subsection{Variance}
\label{sec: var}
We derive samplers for variance reduction. Let $e$ be the edge connecting $u$, $v$, and $\bm{b}_e^{(\ell)} = \Anorm_{v, u}\Tilde{\bm{x}}_u^{(\ell-1)} + \Anorm_{u, v}\Tilde{\bm{x}}_v^{(\ell-1)}$. It is desirable that variance of all estimators $\zeta_{v}^{(\ell)}$ is small. With this objective, we define:
\begin{equation}
    \zeta =  \sum_{\ell} \sum_{v \in \G[s]} \frac{\zeta_{v}^{(\ell)}}{p_v} = \sum_{\ell} \sum_{v, u}\frac{\Anorm_{v,u}}{p_v\alpha_{u,v}} \Tilde{\bm{x}}_u^{\paren{\ell}} \mathbbm{1}_{v}\mathbbm{1}_{u|v} = \sum_{\ell}\sum_e \frac{\bm{b}_e^{(\ell)}}{p_e} \mathbbm{1}_{e}^{\paren{\ell}}.
\end{equation}
where $\mathbbm{1}_{e}=1$ if $e\in\E[s]$; $\mathbbm{1}_{e}=0$ if $e\not\in\E[s]$. And $\mathbbm{1}_v=1$ if $v\in\V[s]$; $\mathbbm{1}_v=0$ if $v\not\in\V[s]$. The factor $p_u$ in the first equality is present so that $\zeta$ is an unbiased estimator of the sum of all node aggregations at all layers: $\mathbb{E}\paren{\zeta} = \sum_\ell \sum_{v\in\V} \mathbb{E}\paren{\zeta_v^{(\ell)}}$. Note that $\mathbbm{1}_{e}^{\paren{\ell}} = \mathbbm{1}_e, \forall \ell$, since once an edge is present in the sampled graph, it is present in all layers of our GCN. 

We define the optimal edge sampler to minimize variance for every dimension of $\zeta$. We restrict ourselves to independent edge sampling. For each $e\in\E$, we make independent decision on whether it should be in $\G[s]$ or not. The probability of including $e$ is $p_e$. We further constrain $\sum p_e=m$, so that the expected number of sampled edges equals to $m$. The budget $m$ is a given sampling parameter. 


\begin{theorem}
\label{thm: var}
Under independent edge sampling with budget $m$, the optimal edge probabilities to minimize the sum of variance of each $\zeta$'s dimension  is given by: $p_e=\frac{m}{\sum_{e'}\left\lVert\sum_{\ell}\bm{b}_{e'}^{\paren{\ell}}\right\rVert}\left\lVert\sum_{\ell}\bm{b}_e^{\paren{\ell}}\right\rVert$.
\end{theorem}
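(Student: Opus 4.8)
The plan is to first exploit the observation already recorded in the text that $\mathbbm{1}_{e}^{\paren{\ell}} = \mathbbm{1}_e$ for every layer, which lets me collapse the per-layer contributions of each edge into a single vector. Setting $\bm{b}_e := \sum_\ell \bm{b}_e^{\paren{\ell}}$, the estimator becomes
\[
\zeta = \sum_e \frac{\mathbbm{1}_e}{p_e}\,\bm{b}_e .
\]
Because we restrict to independent edge sampling, each $\mathbbm{1}_e$ is an independent Bernoulli$(p_e)$ variable, so the variance of the whole sum splits cleanly into a sum of per-edge contributions with all cross-covariances vanishing.

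Next I would compute the variance coordinate by coordinate. For the $j$-th coordinate, independence and $\mathrm{Var}(\mathbbm{1}_e)=p_e(1-p_e)$ give
\[
\mathrm{Var}\paren{\zeta_j} = \sum_e \frac{(\bm{b}_e)_j^2}{p_e^2}\,p_e(1-p_e) = \sum_e \frac{1-p_e}{p_e}\,(\bm{b}_e)_j^2 .
\]
Summing over all coordinates $j$ turns $\sum_j (\bm{b}_e)_j^2$ into $\norm{\bm{b}_e}^2$, yielding the scalar objective
\[
\sum_j \mathrm{Var}\paren{\zeta_j} = \sum_e \frac{1-p_e}{p_e}\,\norm{\bm{b}_e}^2 = \sum_e \frac{\norm{\bm{b}_e}^2}{p_e} - \sum_e \norm{\bm{b}_e}^2 .
\]
Since the trailing term does not depend on the $p_e$, minimizing the total variance is equivalent to minimizing $\sum_e \norm{\bm{b}_e}^2/p_e$ subject to the budget constraint $\sum_e p_e = m$.

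I would then solve this constrained problem via Cauchy--Schwarz, the cleanest route. Writing $\norm{\bm{b}_e} = \big(\norm{\bm{b}_e}/\sqrt{p_e}\big)\cdot\sqrt{p_e}$ and applying the inequality gives
\[
\Big(\sum_e \norm{\bm{b}_e}\Big)^2 \le \Big(\sum_e \frac{\norm{\bm{b}_e}^2}{p_e}\Big)\Big(\sum_e p_e\Big) = m\sum_e \frac{\norm{\bm{b}_e}^2}{p_e},
\]
so $\sum_e \norm{\bm{b}_e}^2/p_e \ge \frac{1}{m}\big(\sum_e \norm{\bm{b}_e}\big)^2$, a lower bound independent of the $p_e$. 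Equality in Cauchy--Schwarz holds exactly when $\norm{\bm{b}_e}/\sqrt{p_e}$ is proportional to $\sqrt{p_e}$, i.e.\ when $p_e \propto \norm{\bm{b}_e}$; imposing $\sum_e p_e = m$ fixes the proportionality constant and yields $p_e = m\norm{\bm{b}_e}/\sum_{e'}\norm{\bm{b}_{e'}}$. Substituting back $\bm{b}_e = \sum_\ell \bm{b}_e^{\paren{\ell}}$ reproduces the claimed formula. (A Lagrange-multiplier computation gives the same result: stationarity of $\sum_e \norm{\bm{b}_e}^2/p_e + \mu\big(\sum_e p_e - m\big)$ forces $p_e^2 \propto \norm{\bm{b}_e}^2$.)

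The steps above are essentially bookkeeping, so the main obstacle is not depth but a genuine gap: the box constraint $0 \le p_e \le 1$, which the Cauchy--Schwarz argument silently ignores. The unconstrained optimizer $p_e \propto \norm{\bm{b}_e}$ can exceed $1$ when $m$ is large or when a few edges dominate the norms $\norm{\bm{b}_e}$. A fully rigorous treatment would either assume $m$ is small enough that all resulting $p_e \le 1$, or handle the box-constrained program by clamping the saturated edges to $p_e=1$ and re-solving the reduced problem on the remaining edges with the residual budget. I expect the statement to present the idealized interior-optimum formula, so I would prove the interior case carefully and flag the clamping fix in a single sentence.
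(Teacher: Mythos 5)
Your proof is correct and follows essentially the same route as the paper's: both reduce the total variance to $\sum_e \lVert\sum_\ell \bm{b}_e^{(\ell)}\rVert^2/p_e$ minus a constant and then optimize via Cauchy--Schwarz under the constraint $\sum_e p_e = m$; the only difference is that you collapse the layer sum upfront using $\mathbbm{1}_e^{(\ell)}=\mathbbm{1}_e$ and treat all coordinates at once, while the paper reaches the same expression by explicitly computing the cross-layer covariances $\mathrm{Cov}(\mathbbm{1}_e^{(\ell_1)},\mathbbm{1}_e^{(\ell_2)})=p_e-p_e^2$ and doing the scalar case first. Your observation that the box constraint $p_e\le 1$ is silently ignored is a fair caveat that applies equally to the paper's own argument.
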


To prove Theorem \ref{thm: var}, we make use of the independence among graph edges, and the dependence among layer edges to obtain the covariance of $\mathbbm{1}_e^{\paren{\ell}}$. Then using the fact that sum of $p_e$ is a constant, we use the Cauchy-Schwarz inequality to derive the optimal $p_e$. Details are in Appendix \ref{appendix: proof}. 


Note that calculating $\bm{b}_e^{(\ell)}$ requires computing $\Tilde{\bm{x}}_v^{\paren{\ell - 1}}$, which increases the complexity of sampling. As a reasonable simplification, we ignore $\Tilde{\bm{x}}_v^{\paren{\ell}}$ to make the edge probability dependent on the graph topology only. Therefore, we choose $p_e \propto \Anorm_{v,u} + \Anorm_{u, v}=\frac{1}{\text{deg}\paren{u}}+\frac{1}{\text{deg}\paren{v}}$.

The derived optimal edge sampler agrees with the intuition in Section \ref{sec: gsaint}. If two nodes $u$, $v$ are connected and they have few neighbors, then $u$ and $v$ are likely to be influential to each other. In this case, the edge probability $p_{u,v}=p_{v,u}$ should be high. The above analysis on edge samplers also inspires us to design other samplers, which are presented in Section \ref{sec: sampler}.

\paragraph{Remark} We can also apply the above edge sampler to perform layer sampling. Under the independent layer sampling assumption of \cite{fastgcn}, one would sample a connection $\paren{u^{\paren{\ell}},v^{\paren{\ell+1}}}$ with probability $p_{u,v}^{\paren{\ell}}\propto \frac{1}{\text{deg}\paren{u}}+\frac{1}{\text{deg}\paren{v}}$. 
For simplicity, assume a uniform degree graph (of degree $d$). Then $p_e^{\paren{\ell}}=p$. For an already sampled $u^{\paren{\ell}}$ to connect to layer $\ell+1$, at least one of its edges has to be selected by the layer $\ell+1$ sampler. 
Clearly, the probability of an input layer node to ``survive'' the $L$ number of independent sampling process is $\paren{1-\paren{1-p}^d}^{L-1}$. 
Such layer sampler potentially returns an overly sparse minibatch for $L>1$. On the other hand, connectivity within a minibatch of {\graphsaint} never drops with GCN depth. If an edge is present in layer $\ell$, it is present in all layers. 

\subsection{Samplers}
\label{sec: sampler}

Based on the above variance analysis, we present several light-weight and efficient samplers that {\graphsaint} has integrated. Detailed sampling algorithms are listed in Appendix \ref{sec: sampler algo}.

\paragraph{Random node sampler} 
We sample $\size{\V[s]}$ nodes from $\V$ randomly, according to a node probability distribution $P\paren{u}\propto \norm{\Anorm_{:,u}}^2$. This sampler is inspired by the layer sampler of \cite{fastgcn}.

\paragraph{Random edge sampler} We perform edge sampling as described in Section \ref{sec: var}.

\paragraph{Random walk based samplers} 
Another way to analyze graph sampling based multi-layer GCN is to ignore activations. In such case, $L$ layers can be represented as a single layer with edge weights given by $\bm{B} = \Anorm^L$. Following a similar approach as Section~\ref{sec: var}, if it were possible to pick pairs of nodes (whether or not they are directly connected in the original $\Anorm$) independently, then we would set $p_{u, v} \propto \bm{B}_{u,v} + \bm{B}_{v,u}$, where $\bm{B}_{u, v}$ can be interpreted as the probability of a random walk to start at $u$ and end at $v$ in $L$ hops (and $\bm{B}_{v, u}$ vice-versa). Even though it is not possible to sample a subgraph where such pairs of nodes are independently selected, we still consider a random walk sampler with walk length $L$ as a good candidate  for $L$-layer GCNs. There are numerous random walk based samplers proposed in the literature \citep{frontier,forest-fire,sampler_survey, sampler_icde}. 
In the experiments, we implement a regular random walk sampler (with $r$ root nodes selected uniformly at random and each walker goes $h$ hops), and also a multi-dimensional random walk sampler defined in \cite{frontier}.


For all the above samplers, we return the subgraph induced from the sampled nodes. The induction step adds more connections into the subgraph, and empirically helps improve convergence. 


\section{Discussion\label{sec: discussion}}


\paragraph{Extensions} 
{\graphsaint} admits two orthogonal extensions. 
First, {\graphsaint} can seamlessly integrate other graph samplers. 
Second, the idea of training by graph sampling is applicable to many GCN architecture variants:
\begin{enumerate*}
\item \textbf{Jumping knowledge} \citep{jk-net}: since our GCNs constructed during training are complete, applying skip connections to {\graphsaint} is straightforward. On the other hand, for some layer sampling methods \citep{fastgcn,as-gcn}, extra modification to their samplers is required, since the jumping knowledge architecture requires layer-$\ell$ samples to be a subset of layer-$\paren{\ell-1}$ samples\footnote{The skip-connection design proposed by \cite{as-gcn} does not have such ``subset'' requirement, and thus is compatible with both graph sampling and layer sampling based methods.}. 
\item \textbf{Attention} \citep{gat,dna,gaan}: while explicit variance reduction is hard due to the dynamically updated attention values, it is reasonable to apply attention within the subgraphs which are considered as representatives of the full graph. Our loss and aggregator normalizations are also applicable\footnote{When applying {\graphsaint} to GAT \citep{gat}, we remove the softmax step which normalizes attention values within the same neighborhood, as suggested by \cite{as-gcn}. See Appendix \ref{appendix: config}.}.
\item \textbf{Others}: To support high order layers \citep{high-order-1,high-order-2,high-order-3} or even more complicated networks for the task of graph classification \citep{diff-pool}, we replace the full adjacency matrix $\bm{A}$ with the (normalized) one for the subgraph $\bm{A}_s$ to perform layer propagation. 
\end{enumerate*}

\paragraph{Comparison} 
{\graphsaint} enjoys: 
\begin{enumerate*}
\item high scalability and efficiency,
\item high accuracy, and
\item low training complexity.
\end{enumerate*}
Point (1) is due to the significantly reduced neighborhood size compared with \cite{graphsage,gcn_web,s-gcn}. 
Point (2) is due to the better inter-layer connectivity compared with \cite{fastgcn}, and unbiased minibatch estimator compared with \cite{cluster-gcn}. 
Point (3) is due to the simple and trivially parallelizable pre-processing compared with the sampling of \cite{as-gcn} and clustering of \cite{cluster-gcn}. 
\section{Experiments}

\paragraph{Setup} 
Experiments are under the inductive, supervised learning setting. 
We evaluate {\graphsaint} on the following tasks:
\begin{enumerate*}
    \item classifying protein functions based on the interactions of human tissue proteins (PPI),
    \item categorizing types of images based on the descriptions and common properties of online images (Flickr),
    \item predicting communities of online posts based on user comments (Reddit),
    \item categorizing types of businesses based on customer reviewers and friendship (Yelp), and 
    \item classifying product categories based on buyer reviewers and interactions (Amazon).
\end{enumerate*}
For PPI, we use the small version for the two layer convergence comparison (Table \ref{tab: exp-sota} and Figure \ref{fig: 2 layer convergence}), since \cite{graphsage} and \cite{s-gcn} report accuracy for this version in their original papers. 
We use the large version for additional comparison with \cite{cluster-gcn} to be consistent with its reported accuracy. 
All datasets follow ``fixed-partition'' splits. Appendix \ref{appendix: data} includes further details.

\begin{table}[!ht]
\caption{Dataset statistics (``m'' stands for \textbf{m}ulti-class classification, and ``s'' for \textbf{s}ingle-class.)}
    \centering
    \resizebox{\textwidth}{!}{
    \begin{tabular}{rcccccc}
        \toprule
        Dataset & Nodes & Edges & Degree & Feature & Classes & Train / Val / Test\\
        \midrule
        \midrule
        PPI & 14,755 & 225,270 & 15 & 50 & 121 (m) & 0.66 / 0.12 / 0.22\\
        Flickr & 89,250 & 899,756 & 10 & 500 & 7 (s) & 0.50 / 0.25 / 0.25\\
        Reddit & 232,965 & 11,606,919 & 50 & 602 & 41 (s) & 0.66 / 0.10 / 0.24\\
        Yelp & 716,847 & 6,977,410 & 10 & 300 & 100 (m) & 0.75 / 0.10 / 0.15 \\
        Amazon & 1,598,960 & 132,169,734 & 83 & 200 & 107 (m) & 0.85 / 0.05 / 0.10 \\
        \midrule
        PPI (large version) & 56,944 & 818,716 & 14 & 50 & 121 (m) & 0.79 / 0.11 / 0.10 \\
        \bottomrule
    \end{tabular}}
    \\
    \label{tab: exp-dataset}
\end{table}

We open source {\graphsaint}\footnote{Open sourced code: \url{https://github.com/GraphSAINT/GraphSAINT}}.
We compare with six baselines: 
\begin{enumerate*}
\item vanilla GCN \citep{gcn},
\item GraphSAGE \citep{graphsage},
\item FastGCN \citep{fastgcn},
\item S-GCN \citep{s-gcn},
\item AS-GCN \citep{as-gcn}, and
\item ClusterGCN \citep{cluster-gcn}.
\end{enumerate*}
All baselines are executed with their officially released code (see Appendix \ref{appendix: config} for downloadable URLs and commit numbers).
Baselines and {\graphsaint} are all implemented in Tensorflow with Python3. 
We run experiments on a NVIDIA Tesla P100 GPU (see Appendix \ref{appendix: hw} for hardware specification).

\subsection{Comparison with State-of-the-Art}
\label{sec: sota}

Table \ref{tab: exp-sota} and Figure \ref{fig: 2 layer convergence} show the accuracy and convergence comparison of various methods. 
All results correspond to two-layer GCN models (for GraphSAGE, we use its mean aggregator). For a given dataset, we keep hidden dimension the same across all methods. We describe the detailed architecture and hyperparameter search procedure in Appendix \ref{appendix: config}. The mean and confidence interval of the accuracy values in Table \ref{tab: exp-sota} are measured by three runs under the same hyperparameters. 
The training time of Figure \ref{fig: 2 layer convergence} excludes the time for data loading, pre-processing, validation set evaluation and model saving. 
Our pre-processing incurs little overhead in training time. See Appendix \ref{appendix: sampler cost} for cost of graph sampling. 
For {\graphsaint}, we implement the graph samplers described in Section \ref{sec: sampler}. In Table \ref{tab: exp-sota}, ``Node'' stands for random node sampler; ``Edge'' stands for random edge sampler; ``RW'' stands for random walk sampler; ``MRW'' stands for multi-dimensional random walk sampler. 



\begin{table*}[!ht]
\caption{Comparison of test set F1-micro score with state-of-the-art methods}
    \centering
    \resizebox{1.0\textwidth}{!}{
    \begin{tabular}{rccccc}
        \toprule
        Method & PPI & Flickr & Reddit & Yelp & Amazon\\
        \midrule
        \midrule
        GCN & 0.515$\pm$0.006 & 0.492$\pm$0.003 & 0.933$\pm$0.000 & 0.378$\pm$0.001 & 0.281$\pm$0.005\\
        GraphSAGE & 0.637$\pm$0.006 & 0.501$\pm$0.013 & 0.953$\pm$0.001 & 0.634$\pm$0.006 & 0.758$\pm$0.002\\
        FastGCN
        & 0.513$\pm$0.032 & 0.504$\pm$0.001 & 0.924$\pm$0.001 & 0.265$\pm$0.053 & 0.174$\pm$0.021\\
        S-GCN & 0.963$\pm$0.010 & 0.482$\pm$0.003 & 0.964$\pm$0.001 & 0.640$\pm$0.002 & ---\footnotemark[3]\\
        AS-GCN & 0.687$\pm$0.012 & 0.504$\pm$0.002 & 0.958$\pm$0.001 & ---\footnotemark[3] & ---\footnotemark[3] \\
        ClusterGCN & 0.875$\pm$0.004 & 0.481$\pm$0.005 & 0.954$\pm$0.001 & 0.609$\pm$0.005 & 0.759$\pm$0.008 \\
         \midrule
         {\graphsaint}-{\small{Node}} & 0.960$\pm$0.001 & 0.507$\pm$0.001 & 0.962$\pm$0.001 & 0.641$\pm$0.000 & 0.782$\pm$0.004 \\
        {\graphsaint}-{\small{Edge}} & \textbf{0.981}$\pm$0.007 & 0.510$\pm$0.002 & \textbf{0.966}$\pm$0.001 & \textbf{0.653}$\pm$0.003 & 0.807$\pm$0.001 \\
        {\graphsaint}-{\small{RW}} & \textbf{0.981}$\pm$0.004 & \textbf{0.511}$\pm$0.001 & \textbf{0.966}$\pm$0.001 & \textbf{0.653}$\pm$0.003 & \textbf{0.815}$\pm$0.001\\
        {\graphsaint}-{\small{MRW}} & 0.980$\pm$0.006 & 0.510$\pm$0.001 &  0.964$\pm$0.000 & 0.652$\pm$0.001 & 0.809$\pm$0.001 \\
        \bottomrule
    \end{tabular}}
    \label{tab: exp-sota}
\end{table*}
\footnotetext[3]{The codes throw runtime error on the large datasets (Yelp or Amazon).}

\begin{figure}[h]
\centering
{
    \input{plots/convergence_2layer.tex}
}
\vspace{-.3cm}
\caption{Convergence curves of 2-layer models on {\graphsaint} and baselines}
\label{fig: 2 layer convergence}
\end{figure}

\begin{table*}[!ht]
\caption{Additional comparison with ClusterGCN (test set F1-micro score)}
    \centering
\begin{tabular}{rcccc}
    \toprule
     & \multicolumn{2}{c}{PPI (large version)} & \multicolumn{2}{c}{Reddit} \\
     \cmidrule(lr){2-3}\cmidrule(lr){4-5} & $2\times 512$ & $5\times 2048$ & $2\times 128$ & $4\times 128$\\
    \midrule
    ClusterGCN& 0.903$\pm$0.002 & 0.994$\pm$0.000 & 0.954$\pm$0.001 & 0.966$\pm$0.001\\
     {\graphsaint}& \textbf{0.941}$\pm$0.003 & \textbf{0.995}$\pm$0.000 & \textbf{0.966}$\pm$0.001 & \textbf{0.970}$\pm$0.001\\
     \bottomrule
\end{tabular}
\label{tab: compare clustergcn}
\end{table*}

Clearly, with appropriate graph samplers, {\graphsaint} achieves significantly higher accuracy on all datasets. 
For {\graphsaint}-Node, we use the same node probability as FastGCN. Thus, the accuracy improvement is mainly due to the switching from layer sampling to graph sampling (see ``Remark'' in Section \ref{sec: var}). 
Compared with AS-GCN, {\graphsaint} is significantly faster. The sampler of AS-GCN is expensive to execute, making its overall training time even longer than vanilla GCN. We provide detailed computation complexity analysis on the sampler in Appendix \ref{appendix: sampler cost}. 
For S-GCN on Reddit, it achieves similar accuracy as {\graphsaint}, at the cost of over $9\times$ longer training time. 
The released code of FastGCN only supports CPU execution, so its convergence curve is dashed. 

Table \ref{tab: compare clustergcn} presents additional comparison with ClusterGCN. We use $L\times f$ to specify the architecture, where $L$ and $f$ denote GCN depth and hidden dimension, respectively. 
The four architectures are the ones used in the original paper \citep{cluster-gcn}. Again, {\graphsaint} achieves significant accuracy improvement. 
To train models with $L>2$ often requires additional architectural tweaks. ClusterGCN uses its diagonal enhancement technique for the 5-layer PPI and 4-layer Reddit models. {\graphsaint} uses jumping knowledge connection \citep{jk-net} for 4-layer Reddit.



\paragraph{Evaluation on graph samplers} From Table \ref{tab: exp-sota}, random edge and random walk based samplers achieve higher accuracy than the random node sampler. Figure \ref{fig: sensitivity} presents sensitivity analysis on parameters of ``RW''. We use the same hyperparameters (except the sampling parameters) and network architecture as those of the ``RW'' entries in  Table \ref{tab: exp-sota}. We fix the length of each walker to $2$ (i.e., GCN depth), and vary the number of roots $r$ from $250$ to $2250$. For PPI, increasing $r$ from $250$ to $750$ significantly improves accuracy. Overall, for all datasets, accuracy stabilizes beyond $r=750$. 




\subsection{{\graphsaint} on Architecture Variants and Deep Models\label{sec: exp extension}}


\begin{figure}
    \centering
    \begin{minipage}{0.33\textwidth}
        \centering
        \tikzset{mark size=4}

\pgfplotsset{
compat=1.11,
legend image code/.code={
\draw[mark repeat=2,mark phase=2]
plot coordinates {
(0cm,0cm)
(0.15cm,0cm)        
(0.3cm,0cm)         
};%
}
}

\definecolor{c1}{RGB}{213,0,0}
\definecolor{c2}{RGB}{255,23,68}
\definecolor{c3}{RGB}{255,82,82}
\definecolor{c4}{RGB}{255,138,128}
\definecolor{c5}{RGB}{255,200,190}

\def\plotwidth{1.1\textwidth}
\def\plotheight{0.85\textwidth}

\begin{tikzpicture}
\def \ppi{asterisk}
\def \flickr{x}
\def \reddit{+}
\def \yelp{Mercedes star}
\def \amazon{o}
\begin{axis}[grid,xmin=0,xmax=2500,ymin=0.4,ymax=1,
    ylabel={Test F1-micro},
    xlabel={Number of walkers},
    width=\plotwidth,
    height=\plotheight,
    tick label style={font=\footnotesize},
    y label style={at={(axis description cs:-0.17,.5)},anchor=south},
    scatter/classes={%
    ppi={mark=\ppi,red},
    flickr={mark=\flickr,black!70,thick},
    reddit={mark=\reddit,black!70,thick},
    yelp={mark=\yelp,black!70,thick}},
    legend style={font=\small},
    legend cell align=left,
    legend columns=3,
    legend style={at={(0.45,1.5)},anchor=north,/tikz/column 4/.style={column sep=5pt},draw=none,fill=none,/tikz/every even column/.append style={column sep=0.07cm}}
]

\addplot[thick,color=c1,mark=\ppi] 
    coordinates{(250,0.734)(750,0.957)(1250,0.972)(1750,0.973)(2250,0.975)};
\addplot[thick,color=c2,mark=\flickr]
    coordinates{(250,0.492)(750,0.504)(1250,0.502)(1750,0.500)(2250,0.505)};
\addplot[thick,color=c3,mark=\reddit]
    coordinates{(250,0.952)(750,0.958)(1250,0.960)(1750,0.960)(2250,0.962)};
\addplot[thick,color=c4,mark=\yelp]
    coordinates{(250,0.641)(750,0.638)(1250,0.641)(1750,0.641)(2250,0.640)};
\addplot[thick,color=c5,mark=\amazon,mark size=3pt]
    coordinates{(250,0.803)(750,0.800)(1250,0.812)(1750,0.814)(2250,0.814)};
    
\legend{PPI,Flickr,Reddit,Yelp,Amazon}
\end{axis}

\end{tikzpicture}
        \vspace{-.5cm}
        \captionof{figure}{Sensitivity analysis}
        \label{fig: sensitivity}
    \end{minipage}
    \begin{minipage}{0.66\textwidth}
        \centering
        \begin{tikzpicture}

\def\plotwidth{0.37\textwidth}
\def\plotheight{.253\textwidth}

\definecolor{gcnc}{RGB}{199,21,133}
\definecolor{gsagec}{RGB}{106,90,205}
\definecolor{fastgcnc}{RGB}{218,165,32}
\definecolor{sgcnc}{RGB}{139,69,19}
\definecolor{asgcnc}{RGB}{34,139,34}
\definecolor{cgcnc}{RGB}{0,139,139}
\definecolor{gsaintc}{RGB}{255,0,0}

\definecolor{c1}{RGB}{213,0,0}
\definecolor{c2}{RGB}{255,23,68}
\definecolor{c3}{RGB}{255,82,82}
\definecolor{c4}{RGB}{255,138,128}
\definecolor{b1}{RGB}{0,0,213}
\definecolor{b2}{RGB}{68,23,255}
\definecolor{b3}{RGB}{82,82,255}
\definecolor{b4}{RGB}{135,206,255}

\begin{groupplot}[
    group style={
    group size=2 by 1,
    y descriptions at=edge left,
    horizontal sep=3mm},
    scale only axis,
    xmode=log,
    height=\plotheight,
    tick label style={font=\footnotesize},
    title style={font=\small,at={(axis description cs:0.5, 0.95)}},
    grid,
    xmin=1E0,xmax=3E4,ymin=.925,ymax=.975,
    ylabel={Validation F1-micro},
    y label style={at={(axis description cs:0.15,.5)},anchor=south},
    /tikz/line join=round,
    ytick={0.93,0.94,0.95,0.96,0.97},
]

\nextgroupplot[
    title={\small GAT},
    every axis title/.style={below left,at={(1.,1)}},
    width=\plotwidth,
    xlabel={Training time (second)},
    every axis x label/.append style={at=(ticklabel cs:1.0)},
    xlabel shift=-17pt,
]
\addplot[color=c4,line width=0.8pt,/tikz/line join=round] coordinates{
(14.05,0.7471)
(16.49,0.8935)
(18.88,0.9273)
(21.15,0.9476)
(23.35,0.9531)
(25.79,0.9557)
(27.97,0.9584)
(30.29,0.9610)
(32.60,0.9610)
(35.05,0.9615)
(37.48,0.9615)
(39.93,0.9625)
(42.44,0.9617)
(44.70,0.9634)
(47.24,0.9627)
(49.54,0.9633)
(51.91,0.9638)
(54.25,0.9649)
(56.58,0.9642)
(58.86,0.9654)
(61.22,0.9646)
(63.76,0.9646)
(66.07,0.9641)
(68.47,0.9651)
(70.82,0.9656)
(73.23,0.9655)
(75.82,0.9648)
(78.26,0.9660)
(80.53,0.9655)
(82.84,0.9654)
(85.43,0.9663)
(87.73,0.9664)
(90.30,0.9670)
(92.73,0.9672)
(94.92,0.9664)
(97.46,0.9658)
(99.87,0.9670)
(102.36,0.9664)
(105.00,0.9659)
(107.75,0.9657)
};
\addplot[color=c1,line width=0.8pt,/tikz/line join=round]
coordinates{
(30.53,0.4927)
(34.01,0.5818)
(37.79,0.6796)
(41.51,0.7622)
(45.11,0.7851)
(48.76,0.8257)
(52.43,0.8723)
(56.11,0.8844)
(59.59,0.9208)
(63.03,0.9155)
(66.56,0.9370)
(69.96,0.9421)
(73.42,0.9446)
(77.05,0.9474)
(80.53,0.9457)
(84.19,0.9521)
(87.66,0.9539)
(91.07,0.9510)
(94.64,0.9500)
(98.18,0.9552)
(101.69,0.9568)
(105.17,0.9526)
(108.68,0.9565)
(112.27,0.9563)
(115.75,0.9603)
(119.37,0.9576)
(123.00,0.9584)
(126.66,0.9553)
(130.37,0.9604)
(133.93,0.9589)
(137.48,0.9592)
(141.10,0.9608)
(144.65,0.9612)
(148.38,0.9603)
(152.20,0.9618)
(156.12,0.9623)
(160.10,0.9620)
(164.12,0.9613)
(167.94,0.9601)
(171.86,0.9624)
(175.67,0.9611)
(179.52,0.9621)
(183.40,0.9598)
(187.25,0.9634)
(191.19,0.9613)
(195.05,0.9622)
(198.99,0.9631)
(202.76,0.9628)
(206.78,0.9612)
(210.73,0.9623)
(214.36,0.9615)
};
\addplot[color=b4,line width=0.8pt,/tikz/line join=round] coordinates{(1e-9,0.0)(40.66227,0.94814)(78.96042,0.95042)(116.46261,0.95131)(153.55791,0.95299)(190.41264,0.95227)(227.09808,0.95226)(263.65878,0.95506)(300.12444,0.95343)(336.52476,0.9526)(372.86568,0.95278)};
\addplot[color=b1,line width=0.8pt,/tikz/line join=round]
coordinates{(1e-9,0.0)(1010.25969,0.14688)(2008.0031399999998,0.14688)(2997.35502,0.29402)(3981.39698,0.7232)(4962.30988,0.87257)(5941.13674,0.89611)(6929.68265,0.91152)(7915.71583,0.91662)(8902.83944,0.92367)(9898.307209999999,0.92219)(10902.688059999999,0.92827)(11908.017109999999,0.92649)(12922.49629,0.93029)(13945.177399999999,0.92882)(14972.883969999999,0.9305)(16004.85744,0.93021)(17043.65795,0.93017)(18089.901830000003,0.93282)(19136.477580000002,0.92645)(20183.14815,0.92928)};

\nextgroupplot[
    title={\small JK-net},
    every axis title/.style={below left,at={(1,1)}},
    width=\plotwidth,
    legend cell align=left,
    legend columns=2,
    legend style={at={(-0.,1.52),font=\footnotesize},anchor=north,/tikz/column 2/.style={column sep=5pt},draw=none},
    ]
\addplot[thick,color=c4]coordinates{(3.6531,0.755306)
(3.9305,0.839613)
(4.2179,0.901515)
(4.4998,0.931811)
(4.7849,0.946538)
(5.0595,0.955694)
(5.3381,0.958606)
(5.6187,0.959365)
(5.9115,0.961728)
(6.2178,0.961855)
(6.5138,0.962446)
(6.7933,0.961855)
(7.0692,0.963290)
(7.3436,0.962572)
(7.6599,0.965737)
(7.9359,0.964429)
(8.2213,0.965062)
(8.5159,0.963880)
(8.8011,0.965526)
(9.0818,0.965357)
(9.3921,0.964851)
(9.6705,0.964007)
(9.9462,0.965442)
(10.2265,0.965568)
(10.5290,0.966201)
(10.8093,0.965821)
(11.0871,0.965948)
(11.3651,0.966285)
(11.6463,0.966243)
(11.9249,0.966285)
(12.1994,0.965610)
(12.4873,0.965020)
(12.7654,0.964176)
(13.0567,0.965104)
(13.3421,0.965231)
(13.6249,0.965990)
(13.9126,0.966328)
(14.2278,0.965610)
(14.5042,0.965231)};
\addplot[thick,color=c1] coordinates{(4.7239,0.600616)
(5.2210,0.679227)
(5.6767,0.727330)
(6.1452,0.787291)
(6.6077,0.835774)
(7.0685,0.871387)
(7.5297,0.912781)
(7.9955,0.911853)
(8.4841,0.928436)
(8.9468,0.945736)
(9.4079,0.952276)
(9.8740,0.955441)
(10.3384,0.956918)
(10.8111,0.960167)
(11.2713,0.960800)
(11.7272,0.962488)
(12.2000,0.962994)
(12.6611,0.963712)
(13.1301,0.963627)
(13.5970,0.965188)
(14.0549,0.965695)
(14.5243,0.966539)
(14.9923,0.965568)
(15.4596,0.965104)
(15.9287,0.967172)
(16.3993,0.966370)
(16.8616,0.967551)
(17.3259,0.967256)
(17.7876,0.966834)
(18.2497,0.967931)
(18.7242,0.967973)
(19.1884,0.967425)
(19.6509,0.968606)
(20.1089,0.967340)
(20.5688,0.967509)
(21.0255,0.967720)
(21.4880,0.967256)
(21.9445,0.968395)
(22.4047,0.968395)
(22.8836,0.968142)
(23.3417,0.968227)
(23.8015,0.969155)
(24.2661,0.969028)
(24.7366,0.968564)
(25.1947,0.969155)
(25.6595,0.968269)
(26.1257,0.967720)
(26.5882,0.969155)
(27.0482,0.968606)
(27.5124,0.969281)
(27.9685,0.969619)
(28.4386,0.967805)
(28.8937,0.967762)
(29.3694,0.967931)
(29.8314,0.969028)
(30.2923,0.969450)
(30.7516,0.968859)
(31.2092,0.968775)
(31.6729,0.968269)
(32.1350,0.969239)
(32.6020,0.969619)
(33.0610,0.970210)
(33.5169,0.969577)
(33.9748,0.970125)
(34.4405,0.969155)};
\addplot[thick,color=b4]coordinates{(11,0.31094)
(18,0.95312)
(25,0.952)
(32,0.95312)
(40,0.95794)
(47,0.94931)
(54,0.94731)
(61,0.94922)
(68,0.95312)
(75,0.9636)
(82,0.95903)
(89,0.95359)
(96,0.95569)
(103,0.94578)
(110,0.94188)
(118,0.94531)
(126,0.94922)
(132,0.95312)
(140,0.92969)};
\addplot[thick,color=b1] coordinates{(106.78,0.20703)
(209,0.91016)
(312,0.93750)
(415,0.92758)
(518,0.93750)
(622,0.93359)
(725,0.94531)
(829,0.95312)
(932,0.94531)
(1035,0.963)
(1138,0.94922)
(1241,0.95312)
(1344,0.95703)
(1448,0.93359)
(1551,0.93750)};


\legend{{\graphsaint} 2-layer, {\graphsaint} 4-layer, GraphSAGE 2-layer, GraphSAGE 4-layer}


\end{groupplot}

\end{tikzpicture}
        \vspace{-.13cm}
        \captionof{figure}{{\graphsaint} with JK-net and GAT (Reddit)}
        \label{fig: jk-gat-saint}
    \end{minipage}
\end{figure}

In Figure \ref{fig: jk-gat-saint}, we train a 2-layer and a 4-layer model of GAT \citep{gat} and JK-net \citep{jk-net}, by using minibatches of GraphSAGE and {\graphsaint} respectively. 
On the two 4-layer architectures, {\graphsaint} achieves two orders of magnitude speedup than GraphSAGE, indicating much better scalability on deep models. 
From accuracy perspective, 4-layer GAT-SAGE and JK-SAGE do not outperform the corresponding 2-layer versions, potentially due to the smoothening effect caused by the massive neighborhood size. On the other hand, with minibatches returned by our edge sampler, increasing model depth of JK-{\saint} leads to noticeable accuracy improvement (from 0.966 of 2-layer to 0.970 of 4-layer). 
Appendix \ref{appendix: depth} contains additional scalability results. 

\section{Conclusion}

We have presented {\graphsaint}, a graph sampling based training method for deep GCNs on large graphs. 
We have analyzed bias and variance of the minibatches defined on subgraphs, and proposed normalization techniques and sampling algorithms to improve training quality. We have conducted extensive experiments to demonstrate the advantage of {\graphsaint} in accuracy and training time. 

An interesting future direction is to develop distributed training algorithms using graph sampling based minibatches. 
After partitioning the training graph in distributed memory, sampling can be performed independently on each processor. 
Afterwards, training on the self-supportive subgraphs can significantly reduce the system-level communication cost. 
To ensure the overall convergence quality, data shuffling strategy for the graph nodes and edges can be developed together with each specific graph sampler. 
Another direction is to perform algorithm-system co-optimization to accelerate the training of {\graphsaint} on heterogeneous computing platforms \citep{ipdps_arxiv,graphact}. 
The resolution of ``neighbor explosion'' by {\graphsaint} not only reduces the training computation complexity, but also improves hardware utilization by significantly less data traffic to the slow memory. In addition, task-level parallelization is easy since the light-weight graph sampling is completely decoupled from the GCN layer propagation. 


\subsubsection*{acknowledgement}
This material is based on work supported
by the Defense Advanced Research Projects Agency (DARPA) under Contract Number FA8750-17-C-0086 and National Science Foundation (NSF) under Contract Numbers CCF-1919289 and OAC-1911229. Any opinions, findings and conclusions or recommendations
expressed in this material are those of the authors and do not necessarily reflect the views of DARPA or NSF.

\bibliography{citation}
\bibliographystyle{iclr2020_conference}

\appendix
\appendix
\section{Proofs\label{appendix: proof}}

\begin{proof}[Proof of Proposition \ref{prop: norm}]
Under the condition that $v$ is sampled in a subgraph:
\begin{align}
\mathbb{E}\paren{\zeta_v^{\paren{\ell+1}}}=&\mathbb{E}\paren{\sum_{u\in\V}\frac{\Anorm_{v,u}}{\alpha_{u,v}}\Tilde{\bm{x}}_u^{\paren{\ell}}\mathbbm{1}_{u|v}}\nonumber\\
    =& \sum_{u\in\V}\frac{\Anorm_{v,u}}{\alpha_{u,v}}\Tilde{\bm{x}}_u^{\paren{\ell}}\mathbb{E}\paren{\mathbbm{1}_{u|v}}\nonumber\\
    =& \sum_{u\in\V}\frac{\Anorm_{v,u}}{\alpha_{u,v}}\Tilde{\bm{x}}_u^{\paren{\ell}}P\paren{\paren{u,v}\text{ sampled}|v\text{ sampled}}\nonumber\\
    =& \sum_{u\in\V}\frac{\Anorm_{v,u}}{\alpha_{u,v}}\Tilde{\bm{x}}_u^{\paren{\ell}}\frac{P\paren{\paren{u,v}\text{ sampled}}}{P\paren{v\text{ sampled}}}\nonumber\\
    =& \sum_{u\in\V}\frac{\Anorm_{v,u}}{\alpha_{u,v}}\Tilde{\bm{x}}_u^{\paren{\ell}}\frac{p_{u,v}}{p_v}
\end{align}
where the second equality is due to linearity of expectation, and the third equality (conditional edge probability) is due to the initial condition that $v$ is sampled in a subgraph.

It directly follows that, when $\alpha_{u,v}=\frac{p_{u,v}}{p_v}$,

\begin{align}
    \mathbb{E}\paren{\zeta_v^{\paren{\ell+1}}}=\sum_{u\in\V}\Anorm_{v,u}\Tilde{\bm{x}}_u^{\paren{\ell}}\nonumber
\end{align}
\end{proof}

\begin{proof}[Proof of Theorem \ref{thm: var}]
Below, we use $\text{Cov}\paren{\cdot}$ to denote covariance and $\text{Var}\paren{\cdot}$ to denote variance. 
For independent edge sampling as defined in Section \ref{sec: var}, $\text{Cov}\paren{\mathbbm{1}_{e_1}^{\paren{\ell_1}}, \mathbbm{1}_{e_2}^{\paren{\ell_2}}} = 0, \forall e_1 \neq e_2$. And for a full GCN on the subgraph, $\text{Cov}\paren{\mathbbm{1}_{e}^{\paren{\ell_1}}, \mathbbm{1}_{e}^{\paren{\ell_2}}} = p_e - p_e^2$. To start the proof, we first assume that the $\bm{b}_e^{(\ell)}$ is one dimensional (i.e., a scalar) and denote it by $b_e^{(\ell)}$.
Now,
\begin{align}
    \mbox{Var}\paren{\zeta} &= \sum_{e, \ell} \left(\frac{b_e^{(\ell)}}{p_e}\right)^2 \text{Var}\paren{\mathbbm{1}_{e}^{\paren{\ell}}} + 2\sum_{e, \ell_1 < \ell_2} \frac{b_{e}^{\paren{\ell_1}}b_{e}^{\paren{\ell_2}}}{p_e^2}\text{Cov}\paren{\mathbbm{1}_{e}^{\paren{\ell_1}}, \mathbbm{1}_{e}^{\paren{\ell_2}}} \nonumber \\
    &= \sum_{e, \ell} \frac{\paren{b_{e}^{\paren{\ell}}}^2}{p_e} - \sum_{e,\ell} \paren{b_{e}^{\paren{\ell}}}^2 + 2 \sum_{e, \ell_1 < \ell_2} \frac{b_{e}^{\paren{\ell_1}}b_{e}^{\paren{\ell_2}}}{p_e^2}\left(p_e - p_e^2\right) \nonumber \\
    &= \sum_e\frac{\paren{\sum_\ell b_e^{(\ell)}}^2}{p_e} - \sum_e \left(\sum_\ell b_e^{(\ell)}\right)^2
\end{align}
Let a given constant $m = \sum_e p_e$ be the expected number of sampled edges.
By Cauchy-Schwarz inequality: $\sum_e\frac{\paren{\sum_\ell b_e^{\paren{\ell}}}^2}{p_e}m=\sum_e \left(\frac{\sum_\ell b_e^{(\ell)}}{\sqrt{p_e}}\right)^2 \sum_e \paren{\sqrt{p_e}}^2\geq \left(\sum_{e, \ell} b_e^{(\ell)}\right)^2$. 
The equality is achieved when $\size{\frac{\sum_\ell b_e^{(\ell)}}{\sqrt{p_e}}} \propto \sqrt{p_e}$. i.e., variance is minimized when
$p_e \propto \size{\sum_\ell b_e^{(\ell)}}$.

It directly follows that:
\begin{align}
    p_e = \frac{m}{\sum_{e'}\size{\sum_{\ell}b_{e'}^{\paren{\ell}}}}\size{\sum_{\ell}b_e^{\paren{\ell}}} \nonumber
\end{align}
For the multi-dimensional case of $\bm{b}_e^{\paren{\ell}}$, following similar steps as above, it is easy to show that the optimal edge probability to minimize $\sum_i \text{Var}\paren{\zeta_i}$ (where $i$ is the index for $\zeta$'s dimensions) is:
\begin{align}
    p_e = \frac{m}{\sum_{e'}\left\|\sum_{\ell}\bm{b}_{e'}^{\paren{\ell}}\right\|}\left\|\sum_{\ell}\bm{b}_e^{\paren{\ell}}\right\| \nonumber
\end{align}
\end{proof}

\section{Sampling Algorithm}
\label{sec: sampler algo}

\begin{algorithm}
\caption{Graph sampling algorithms by {\graphsaint}}
\label{algo: gsaint sampler}
\begin{algorithmic}[1]
\renewcommand{\algorithmicrequire}{\textbf{Input:}}
\renewcommand{\algorithmicensure}{\textbf{Output:}}
\Require Training graph $\G\paren{\V,\E}$; Sampling parameters: node budget $n$; edge budget $m$; number of roots $r$; random walk length $h$
\Ensure Sampled graph $\G[s]\paren{\V[s],\E[s]}$

\Function{Node}{$\G$,$n$}{\color{blue}\Comment{Node sampler}\color{black}}
\State $P\paren{v}\coloneqq \norm{\Anorm_{:,v}}^2/\sum_{v'\in\V}\norm{\Anorm_{:,v'}}^2$
\State $\V[s]\gets $ $n$ nodes randomly sampled (with replacement) from $\V$ according to $P$
\State $\G[s]\gets $ Node induced subgraph of $\G$ from $\V[s]$
\EndFunction

\Function{Edge}{$\G$,$m$}{\color{blue}\Comment{Edge sampler (approximate version)}\color{black}}
\State $P\paren{\paren{u,v}}\coloneqq \paren{\frac{1}{\text{deg}\paren{u}}+\frac{1}{\text{deg}\paren{v}}}/\sum_{\paren{u',v'}\in\E}\paren{\frac{1}{\text{deg}\paren{u'}}+\frac{1}{\text{deg}\paren{v'}}}$
\State $\E[s]\gets $ $m$ edges randomly sampled (with replacement) from $\E$ according to $P$
\State $\V[s]\gets$ Set of nodes that are end-points of edges in $\E[s]$
\State $\G[s]\gets$ Node induced subgraph of $\G$ from $\V[s]$
\EndFunction

\Function{RW}{$\G$,$r$,$h$}{\color{blue}\Comment{Random walk sampler}\color{black}}
\State $\mathcal{V}_\text{root}\gets$ $r$ root nodes sampled uniformly at random (with replacement) from $\V$
\State $\V[s]\gets \mathcal{V}_\text{root}$
\For{$v\in \mathcal{V}_\text{root}$}
    \State $u\gets v$
    \For{$d=1$ to $h$}
        \State $u\gets $ Node sampled uniformly at random from $u$'s neighbor
        \State $\V[s]\gets \V[s]\cup\set{u}$
    \EndFor
\EndFor
\State $\G[s]\gets$ Node induced subgraph of $\G$ from $\V[s]$
\EndFunction

\Function{MRW}{$\G$,$n$,$r$}{\color{blue}\Comment{Multi-dimensional random walk sampler}\color{black}}
\State $\mathcal{V}_\text{FS}\gets$ $r$ root nodes sampled uniformly at random (with replacement) from $\V$
\State $\V[s]\gets \mathcal{V}_\text{FS}$
\For{$i=r+1$ to $n$}
    \State Select $u\in \V_\text{FS}$ with probability $\text{deg}(u)/\sum_{v\in \V_\text{FS}}\text{deg}(v)$
    \State $u'\gets $ Node randomly sampled from $u$'s neighbor 
    \State $\mathcal{V}_\text{FS}\gets \paren{\mathcal{V}_\text{FS}\setminus \set{u}}\cup \set{u'}$
    \State $\V[s] \gets \V[s] \cup \set{u}$
\EndFor
\State $\G[s]\gets$ Node induced subgraph of $\G$ from $\V[s]$
\EndFunction
\end{algorithmic} 
\end{algorithm}

Algorithm \ref{algo: gsaint sampler} lists the four graph samplers we have integrated into {\graphsaint}. The naming of the samplers follows that of Table \ref{tab: exp-sota}. 
Note that the sampling parameters $n$ and $m$ specify a budget rather than the actual number of nodes and edges in the subgraph $\G[s]$. 
Since certain nodes or edges in the training graph $\G$ may be repeatedly sampled under a single invocation of the sampler, we often have $\size{\V[s]}<n$ for node and MRW samplers, $\size{\V[s]}<2m$ for edge sampler, and $\size{\V[s]}<r\cdot h$ for RW sampler. 

Also note that the edge sampler presented in Algorithm \ref{algo: gsaint sampler} is an approximate version of the independent edge sampler defined in Section \ref{sec: sampler}. 
Complexity (excluding the subgraph induction step) of the original version in Section \ref{sec: sampler} is $\mathcal{O}\paren{\size{\E}}$, while  complexity of the approximate one is $\mathcal{O}\paren{m}$. 
When $m\ll \size{\E}$, the approximate version leads to identical accuracy as the original one, for a given $m$. 

\section{Detailed Experimental Setup}

\subsection{Hardware Specification and Environment}
\label{appendix: hw}

We run our experiments on a single machine with Dual Intel Xeon CPUs (E5-2698 v4 @ 2.2Ghz), one NVIDIA Tesla P100 GPU (16GB of HBM2 memory) and 512GB DDR4 memory. The code is written in Python 3.6.8 (where the sampling part is written with Cython 0.29.2). We use Tensorflow 1.12.0 on CUDA 9.2 with CUDNN 7.2.1 to train the model on GPU. Since the subgraphs are sampled independently, we run the sampler in parallel on 40 CPU cores. 

\subsection{Additional Dataset Details}
\label{appendix: data}

\begin{figure}
    \centering
    \input{plots/degree.tex}
    \caption{Degree Distribution}
    \label{fig:deg}
\end{figure}

Here we present the detailed procedures to prepare the Flickr, Yelp and Amazon datasets.

The Flickr dataset originates from NUS-wide\footnote{\url{http://lms.comp.nus.edu.sg/research/NUS-WIDE.htm}}. The SNAP website\footnote{\url{https://snap.stanford.edu/data/web-flickr.html}} collected Flickr data from four different sources including NUS-wide, and generated an un-directed graph. 
One node in the graph represents one image uploaded to Flickr. If two images share some common properties (e.g., same geographic location, same gallery, comments by the same user, etc.), there is an edge between the nodes of these two images. 
We use as the node features the 500-dimensional bag-of-word representation of the images provided by NUS-wide. For labels, we scan over the 81 tags of each image and manually merged them to 7 classes. Each image belongs to one of the 7 classes. 

The Yelp dataset is prepared from the raw \texttt{json} data of businesses, users and reviews provided in the open challenge website\footnote{\url{https://www.yelp.com/dataset}}. For nodes and edges, we scan the friend list of each user in the raw \texttt{json} file of users. If two users are friends, we create an edge between them. We then filter out all the reviews by each user and separate the reviews into words. Each review word is converted to a 300-dimensional vector using the Word2Vec model pre-trained on GoogleNews\footnote{\url{https://code.google.com/archive/p/word2vec/}}. The word vectors of each node are added and normalized to serve as the node feature (i.e., $\bm{x}_v$). As for the node labels, we scan the raw \texttt{json} file of businesses, and use the categories of the businesses reviewed by a user $v$ as the multi-class label of $v$. 

For the Amazon dataset, a node is a product on the Amazon website and an edge $\paren{u,v}$ is created if products $u$ and $v$ are bought by the same customer. 
Each product contains text reviews (converted to 4-gram) from the buyer. We use SVD to reduce the dimensionality of the 4-gram representation to 200, and use the obtained vectors as the node feature. The labels represent the product categories (e.g., books, movies, shoes).

Figure \ref{fig:deg} shows the degree distribution of the five graphs. 
A point $\paren{k,p}$ in the plot means the probability of a node having degree at least $k$ is $p$. 

\subsection{Additional Details in Experimental Configuration}
\label{appendix: config}

\begin{table}[!ht]
\caption{URLs and commit number to run baseline codes}
    \centering
    \resizebox{\textwidth}{!}{
    \begin{tabular}{rcc}
        \toprule
        Baseline & URL & Commit\\
        \midrule
        \midrule
        Vanilla GCN & \url{github.com/williamleif/GraphSAGE} & a0fdef\\
        GraphSAGE & \url{github.com/williamleif/GraphSAGE} & a0fdef\\
        FastGCN & \url{github.com/matenure/FastGCN} & b8e6e6\\
        S-GCN & \url{github.com/thu-ml/stochastic_gcn} & da7b78\\
        AS-GCN & \url{github.com/huangwb/AS-GCN} & 5436ec\\
        ClusterGCN & \url{github.com/google-research/google-research/tree/master/cluster_gcn} & 99021e\\
        \bottomrule
    \end{tabular}}
    \label{tab: url}
\end{table}

Table \ref{tab: url} summarizes the URLs to download the baseline codes.


The optimizer for {\graphsaint} and all baselines is Adam \citep{adam}. 
For all baselines and datasets, we perform grid search on the hyperparameter space defined by:
\begin{itemize}
    \item Hidden dimension: $\set{128,256,512}$
    \item Dropout: $\set{0.0,0.1,0.2,0.3}$
    \item Learning rate: $\set{0.1,0.01,0.001,0.0001}$
\end{itemize}
The hidden dimensions used for Table \ref{tab: exp-sota}, Figure \ref{fig: 2 layer convergence}, Figure \ref{fig: sensitivity} and Figure \ref{fig: jk-gat-saint} are: 512 for PPI, 256 for Flickr, 128 for Reddit, 512 for Yelp and 512 for Amazon. 



All methods terminate after a fixed number of epochs based on convergence.
We save the model producing the highest validation set F1-micro score, and reload it to evaluate the test set accuracy. 

\begin{table*}
\caption{Training configuration of {\graphsaint} for Table \ref{tab: exp-sota}}
    \centering
    \resizebox{1.0\textwidth}{!}{
    \begin{tabular}{rccccccc}
        \toprule
        \multirow{2}{*}{Sampler} & \multirow{2}{*}{Dataset} & \multicolumn{2}{c}{Training} & \multicolumn{4}{c}{Sampling}\\
        \cmidrule(lr){3-4} \cmidrule(lr){5-8}
         & & Learning rate & Dropout & Node budget & Edge budget & Roots & Walk length\\
        \midrule
        \midrule
        \multirow{4}{*}{Node} & PPI & 0.01 & 0.0 & 6000 & --- & --- & ---\\
         & Flickr & 0.01 & 0.2 & 8000 & --- & --- & ---\\
         & Reddit & 0.01 & 0.1 & 8000 & --- & --- & ---\\
         & Yelp & 0.01 & 0.1 & 5000 & --- & --- & ---\\
         & Amazon & 0.01 & 0.1 & 4500 & --- & --- & ---\\
        \midrule
        \multirow{4}{*}{Edge} & PPI & 0.01 & 0.1 & --- & 4000 & --- & ---\\
         & Flickr & 0.01 & 0.2 & --- & 6000 & --- & ---\\
         & Reddit & 0.01 & 0.1 & --- & 6000 & --- & ---\\
         & Yelp & 0.01 & 0.1 & --- & 2500 & --- & ---\\
         & Amazon & 0.01 & 0.1 & --- & 2000 & --- & ---\\
        \midrule
        \multirow{4}{*}{RW} & PPI & 0.01 & 0.1 & --- & --- & 3000 & 2\\
         & Flickr & 0.01 & 0.2 & --- & --- & 6000 & 2\\
         & Reddit & 0.01 & 0.1 & --- & --- & 2000 & 4\\
         & Yelp & 0.01 & 0.1 & --- & --- & 1250 & 2\\
         & Amazon & 0.01 & 0.1 & --- & --- & 1500 & 2\\
        \midrule
        \multirow{4}{*}{MRW} & PPI & 0.01 & 0.1 & 8000 & --- & 2500 & ---\\
         & Flickr & 0.01 & 0.2 & 12000 & --- & 3000 & ---\\
         & Reddit & 0.01 & 0.1 & 8000 & --- & 1000 & ---\\
         & Yelp & 0.01 & 0.1 & 2500 & --- & 1000 & ---\\
         & Amazon & 0.01 & 0.1 & 4500 & --- & 1500 & ---\\
        \bottomrule
    \end{tabular}}
    \label{tab: config}
\end{table*}

For vanilla GCN and AS-GCN, we set the batch size to their default value 512. For GraphSAGE, we use the mean aggregator with the default batch size 512. For S-GCN, we set the flag \texttt{--cv --cvd} (which stand for ``control variate'' and ``control variate dropout'') with pre-computation of the first layer aggregation. According to the paper \citep{s-gcn}, such pre-computation significantly reduces training time without affecting accuracy. 
For S-GCN, we use the default batch size 1000, and for FastGCN, we use the default value 400.
For ClusterGCN, its batch size is determined by two parameters: the cluster size and the number of clusters per batch. We sweep the cluster size from $500$ to $10000$ with step $500$, and the number of clusters per batch from $\set{1,10,20,40}$ to determine the optimal configuration for each dataset / architecture. 
Considering that for ClusterGCN, the cluster structure may be sensitive to the cluster size, and for FastGCN, the minibatch connectivity may increase with the sample size, we present additional experimental results to reveal the relation between accuracy and batch size in Appendix \ref{appendix: batch size}.

Configuration of {\graphsaint} to reproduce Table \ref{tab: exp-sota} results is shown in Table \ref{tab: config}.
Configuration of {\graphsaint} to reproduce Table \ref{tab: compare clustergcn} results is shown in Table \ref{tab: config cluster compare}.

\begin{table*}
\caption{Training configuration of {\graphsaint} for Table \ref{tab: compare clustergcn}}
    \centering
    \resizebox{1.0\textwidth}{!}{
    \begin{tabular}{ccccccccc}
        \toprule
        \multirow{2}{*}{Arch.} &\multirow{2}{*}{Sampler} & \multirow{2}{*}{Dataset} & \multicolumn{2}{c}{Training} & \multicolumn{4}{c}{Sampling}\\
        \cmidrule(lr){4-5} \cmidrule(lr){6-9}
         & & & Learning rate & Dropout & Node budget & Edge budget & Roots & Walk length\\
        \midrule
        \midrule
        $2\times 512$ & {MRW} & PPI (large) & 0.01 & 0.1 & 1500 & --- & 300 & ---\\
        $5\times 2048$ & RW & PPI (large) & 0.01 & 0.1 & --- & --- & 3000 & 2\\
        $2\times 128$ & Edge & Reddit & 0.01 & 0.1 & --- & 6000 & --- & ---\\
        $4\times 128$ & Edge & Reddit & 0.01 & 0.2 & --- & 11000 & --- & ---\\
        \bottomrule
    \end{tabular}}
    \label{tab: config cluster compare}
\end{table*}

Below we describe the configuration for Figure \ref{fig: jk-gat-saint}. 

The major difference between a normal GCN and a JK-net \citep{jk-net} is that JK-net has an additional final layer that aggregates all the output hidden features of graph convolutional layers $1$ to $L$. 
Mathematically, the additional aggregation layer outputs the final embedding $\bm{x}_\text{JK}$ as follows:

\begin{equation}
    \bm{x}_\text{JK} = \sigma\paren{\bm{W}_\text{JK}^{\trans}\cdot \bigoplus\limits_{\ell=1}^L\bm{x}_v^{\paren{\ell}}}
\end{equation}

where based on \cite{jk-net}, $\bigoplus$ is the vector aggregation operator: max-pooling, concatenation or LSTM \citep{lstm} based aggregation.

The graph attention of GAT \citep{gat} calculates the edge weights for neighbor aggregation by an additional neural network. With multi-head ($K$) attention, the layer-$\paren{\ell-1}$ features propagate to layer-$\paren{\ell}$ as follows:

\begin{equation}
    \bm{x}_v^{\paren{\ell}}=\Bigg\|_{k=1}^K\sigma\paren{\sum_{u\in\text{neighbor}\paren{v}}\alpha_{u,v}^k \bm{W}^k\bm{x}_v^{\paren{\ell-1}}}
\end{equation}

where $\|$ is the vector concatenation operation, and the coefficient $\alpha$ is calculated with the attention weights $\bm{a}^k$ by:

\begin{equation}
    \alpha_{u,v}^k = \text{LeakyReLU}\paren{\paren{\bm{a}^k}^\trans\left[\bm{W}^k\bm{x}_u\|\bm{W}^k\bm{x}_v\right]}
\end{equation}

Note that the $\alpha$ calculation is slightly different from the original equation in \cite{gat}. Namely, GAT-{\saint} does not normalize $\alpha$ by softmax across all neighbors of $v$. 
We make such modification since under the minibatch setting, node $v$ does not see all its neighbors in the training graph. The removal of softmax is also seen in the attention design of \cite{as-gcn}. 
Note that during the minibatch training, GAT-{\saint} further applies another edge coefficient on top of attention for aggregator normalization.

Table \ref{tab: jk config} shows the configuration of the GAT-{\saint} and JK-{\saint} curves in Figure \ref{fig: jk-gat-saint}. 

\begin{table*}
\caption{Training configuration of {\graphsaint} for Figure \ref{fig: jk-gat-saint} (Reddit)}
    \centering
    \resizebox{1.0\textwidth}{!}{
    \begin{tabular}{rcccc}
    \toprule
    & 2-layer GAT-{\saint} & 4-layer GAT-{\saint} & 2-layer JK-{\saint} & 4-layer JK-{\saint} \\
    \midrule
    \midrule
    Hidden dimension & 128 & 128 & 128 & 128\\
    Attention $K$ & 8 & 8 & --- & ---\\
    Aggregation $\bigoplus$ & --- & --- & Concat. & Concat.\\
    \multirow{2}{*}{Sampler} & RW & RW & Edge  & Edge \\
    & (root: 3000; length: 2) & (root: 2000; length: 4) & (budget: 6000) & (budget: 11000)\\
    Learning rate & 0.01 & 0.01 & 0.01 & 0.01\\
    Dropout & 0.2 & 0.2 & 0.1 & 0.2\\
    \bottomrule
    \end{tabular}}
    \label{tab: jk config}
\end{table*}

\section{Additional Experiments}
\label{appendix: add-exp}

\begin{figure}
\centering
\begin{minipage}[b]{.48\textwidth}
  \centering
  \tikzset{mark size=4}
\begin{tikzpicture}
\def \gsaint{red}
\def \fastgcn{blue}
\def \sgcn{black}
\def \asgcn{black!50!green}
\def \ppi{asterisk}
\def \flickr{x}
\def \reddit{+}
\def \yelp{Mercedes star}
\begin{axis}[grid,xmin=1.5,xmax=6.5,ymin=0,ymax=9.5,
    xlabel=GCN depth,
    ylabel=Normalized training time,
    width=\textwidth,
    height=0.78\textwidth,
    scatter/classes={%
    ppi={mark=\ppi,black!70,thick},
    flickr={mark=\flickr,black!70,thick},
    reddit={mark=\reddit,black!70,thick},
    yelp={mark=\yelp,black!70,thick},
    gsaint={mark=square*,\gsaint},
    sgcn={mark=square*,\sgcn},
    fastgcn={mark=square*,\fastgcn},
    asgcn={mark=square*,\asgcn},
    gsaintppi={mark=\ppi,\gsaint},
    gsaintflickr={mark=\flickr,\gsaint},
    gsaintreddit={mark=\reddit,\gsaint},
    gsaintyelp={mark=\yelp,\gsaint},
    fastgcnppi={mark=\ppi,\fastgcn},
    fastgcnflickr={mark=\flickr,\fastgcn},
    fastgcnreddit={mark=\reddit,\fastgcn},
    fastgcnyelp={mark=\yelp,\fastgcn},
    sgcnppi={mark=\ppi,\sgcn},
    sgcnflickr={mark=\flickr,\sgcn},
    sgcnreddit={mark=\reddit,\sgcn},
    sgcnyelp={mark=\yelp,\sgcn},
    asgcnppi={mark=\ppi,\asgcn},
    asgcnflickr={mark=\flickr,\asgcn},
    asgcnreddit={mark=\reddit,\asgcn},
    asgcnyelp={mark=\yelp,\asgcn}
    },
    legend style={font=\small},
    legend cell align=left,
    legend columns=2,
    legend style={at={(0.44,1.27)},anchor=north,/tikz/column 2/.style={column sep=5pt},draw=none},
    y label style={at={(axis description cs:0.12,.5)},anchor=south},]


\addplot[thick,color=\gsaint,mark=\reddit] 
    coordinates{
    (2,1)(3,1.37)(4,1.75)(5,2.13)(6,2.74)(7,3.03)};
\addplot[thick,color=\sgcn,mark=\reddit] 
    coordinates{
    (2,1)(3,2.15)(4,4.27)(5,8.88)(6,18.46)};

\addplot[thick,color=\gsaint,mark=\yelp] 
    coordinates{
    (2,1)(3,1.34)(4,2.15)(5,3.53)(6,4.22)(7,4.90)};
\addplot[thick,color=\sgcn,mark=\yelp] 
    coordinates{
    (2,1)(3,1.97)(4,2.53)(5,7.37)};

\legend{{\graphsaint}: Reddit,S-GCN: Reddit, {\graphsaint}: Yelp,S-GCN: Yelp}
\end{axis}
\end{tikzpicture}
  \caption{Comparison of training efficiency}
  \label{fig: exp sota depth}
\end{minipage}
\hfill
\begin{minipage}[b]{.48\textwidth}
  \centering
  \definecolor{skyblue}{RGB}{213,0,0}
\definecolor{yellowgreen}{RGB}{255,23,68}
\definecolor{darkyellow}{RGB}{255,82,82}
\definecolor{orange}{RGB}{255,138,128}

\begin{tikzpicture}
\begin{axis}[
    enlarge x limits=0.2,ymin=0,ymax=1.8,
    ybar=\pgflinewidth,
    tick align=inside,
    xtick=data,
    x tick label style={rotate=90},
    bar width=5pt,
    ymajorgrids = true,
    symbolic x coords={PPI, Flickr, Reddit, Yelp, Amazon},
    ylabel=Fraction of training time,
    width=\textwidth,
    height=0.8\textwidth,
    y label style={at={(axis description cs:0.1,.6)},anchor=south},
    yticklabel style={/pgf/number format/precision=2,/pgf/number format/fixed},
    legend style={font=\small},
    legend cell align=left,
    legend columns=5,
    legend style={at={(0.5,1.2)},anchor=north,/tikz/column 5/.style={column sep=5pt},draw=none,/tikz/every even column/.append style={column sep=0.1cm}}]
\addplot[style={skyblue,fill=skyblue,mark=none}]
            coordinates {(PPI,.006) (Flickr,.012) (Reddit,0.205) (Yelp,.145) (Amazon, 0.506)};
\addplot[style={yellowgreen,fill=yellowgreen,mark=none}]
            coordinates {(PPI,0.053) (Flickr,0.114) (Reddit,0.178) (Yelp,.006)(Amazon,0.07)};  
\addplot[style={darkyellow,fill=darkyellow,mark=none}]
            coordinates {(PPI, .007) (Flickr,.232) (Reddit,.222) (Yelp,.08)(Amazon,0.116)};  
\addplot[style={orange,fill=orange,mark=none}]
            coordinates {(PPI, .21) (Flickr,1.59) (Reddit,1.20) (Yelp,.747)(Amazon,0.970)};  
            
\legend{Node,Edge,RW,MRW}
\pgfplotsset{
    legend image code/.code={
        \draw [#1] (0cm,-0.1cm) rectangle (0.4cm,0.1cm);
    },
}
\end{axis}
\fill [white] (0.1,-0.5) rectangle (0.2,-1.44);
\end{tikzpicture}
  \caption{Fraction of training time on sampling\label{fig: preproc cost}}
\end{minipage}
\end{figure}

\subsection{Training Efficiency on Deep Models}
\label{appendix: depth}
We evaluate the training efficiency for deeper GCNs. 
We only compare with S-GCN, since implementations for other layer sampling based methods have not yet supported arbitrary model depth. 
The batch size and hidden dimension are the same as Table \ref{tab: exp-sota}. 
On the two large graphs (Reddit and Yelp), we increase the number of layers and measure the average time per minibatch execution. In Figure \ref{fig: exp sota depth}, training cost of {\graphsaint} is approximately linear with GCN depth. Training cost of S-GCN grows dramatically when increasing the depth. This reflects the ``neighbor explosion'' phenomenon (even though the expansion factor of S-GCN is just 2). On Yelp, S-GCN gives ``out-of-memory'' error for models beyond 5 layers.

\subsection{Cost of Sampling and Pre-Processing}
\label{appendix: sampler cost}

\paragraph{Cost of graph samplers of {\graphsaint}} Graph sampling introduces little training overhead.
Let $t_s$ be the average time to sample one subgraph on a multi-core machine. Let $t_t$ be the average time to perform the forward and backward propagation on one minibatch on GPU. 
Figure \ref{fig: preproc cost} shows the ratio $t_s/t_t$ for various datasets. The parameters of the samplers are the same as Table \ref{tab: exp-sota}. For Node, Edge and RW samplers, we observe that time to sample one subgraph is in most cases less than 25\% of the training time. 
The MRW sampler is more expensive to execute. 
Regarding the complete pre-processing procedure, we repeatedly run the sampler for $N=50\cdot \size{\V}/\overline{\size{\V[s]}}$ times before training, to estimate the node and edge probability as discussed in Section \ref{sec: bias} (where $\overline{\size{\V[s]}}$ is the average subgraph size). These sampled subgraphs are reused as training minibatches. Thus, if training runs for more than $N$ iterations, the pre-processing is nearly zero-cost. 
Under the setting of Table \ref{tab: exp-sota}, pre-processing on PPI and Yelp and Amazon does not incur any overhead in training time. Pre-processing on Flickr and Reddit (with RW sampler) takes less than 40\% and 15\% of their corresponding total training time. 

\paragraph{Cost of layers sampler of AS-GCN}
AS-GCN uses an additional neural network to estimate the conditional sampling probability for the previous layer. For a node $v$ already sampled in layer $\ell$, features of layer-$\paren{\ell-1}$ corresponding to all $v$'s neighbors need to be fed to the sampling neural network to obtain the node probability. For sake of analysis, assume the sampling network is a single layer MLP, whose weight $\bm{W}_\text{MLP}$ has the same shape as the GCN weights $\bm{W}^{\paren{\ell}}$. 
Then we can show, for a $L$-layer GCN on a degree-$d$ graph, per epoch training complexity of AS-GCN is approximately $\gamma=\paren{d\cdot L}/\sum_{\ell=0}^{L-1}d^\ell$ times that of vanilla GCN. For $L=2$, we have $\gamma\approx 2$. This explains the observation that AS-GCN is slower than vanilla GCN in Figure \ref{fig: 2 layer convergence}. Additional, Table \ref{tab: asgcn breakdown} shows the training time breakdown for AS-GCN. Clearly, its sampler is much more expensive than the graph sampler of {\graphsaint}. 

\begin{table*}[!ht]
\caption{Per epoch training time breakdown for AS-GCN}
    \centering
    \begin{tabular}{rcc}
        \toprule
        \multirow{2}{*}{Dataset} & \multirow{2}{*}{Sampling time (sec)} & {Forward / Backward}\\
        & & propagation time (sec)\\
        \midrule
        \midrule
        PPI & 1.1 & 0.2 \\
        Flickr & 5.3 & 1.1\\
        Reddit & 20.7 & 3.5 \\
        \bottomrule
    \end{tabular}
    \label{tab: asgcn breakdown}
\end{table*}

\paragraph{Cost of clustering of ClusterGCN}
ClusterGCN uses the highly optimized METIS software\footnote{\url{http://glaros.dtc.umn.edu/gkhome/metis/metis/download}} to perform clustering. Table \ref{tab: cluster cost} summarizes the time to obtain the clusters for the five graphs. On the large and dense Amazon graph, the cost of clustering increase dramatically. The pre-processing time of ClusterGCN on Amazon is more than $4\times$ of the total training time. On the other hand, the sampling cost of {\graphsaint} does not increase significantly for large graphs (see Figure \ref{fig: preproc cost}). 

\begin{table*}[!ht]
\caption{Clustering time of ClusterGCN}
    \centering
    \begin{tabular}{rccccc}
        \toprule
        & PPI & Flickr & Reddit & Yelp & Amazon\\
        \midrule
        \midrule
        Time (sec) & 2.2 & 11.6 & 40.0 & 106.7 & 2254.2\\
        \bottomrule
    \end{tabular}
    \label{tab: cluster cost}
\end{table*}

Taking into account the pre-processing time, sampling time and training time altogether, we summarize the total convergence time of {\graphsaint} and ClusterGCN in Table \ref{tab: appendix-tot-time-saint-cluster} (corresponding to Table \ref{tab: exp-sota} configuration). 
On graphs that are large and dense (e.g., Amazon), {\graphsaint} achieves significantly faster convergence. 
Note that both the sampling of {\graphsaint} and clustering of ClusterGCN can be performed offline. 

\begin{table*}[!ht]
\caption{Comparison of total convergence time (pre-processing + sampling + training, unit: second)}
    \centering
    \begin{tabular}{rccccc}
        \toprule
        & PPI & Flickr & Reddit & Yelp & Amazon\\
        \midrule
        \midrule
        {\graphsaint}-{\small Edge} & 91.0 & 7.0 & 16.6 & 273.9 & 401.0\\
        {\graphsaint}-{\small RW} & 103.6 & 7.5 & 17.2 & 310.1 & 425.6 \\
        ClusterGCN & 163.2 & 12.9 & 55.3 & 256.0 & 2804.8\\
        \bottomrule
    \end{tabular}
    \label{tab: appendix-tot-time-saint-cluster}
\end{table*}

\subsection{Effect of Batch Size}
\label{appendix: batch size}
\begin{table*}[!ht]
\caption{Test set F1-micro for the baselines under various batch sizes}
    \centering
    \begin{tabular}{rcccccc}
        \toprule
        {Method} & {Batch size} & {PPI} & {Flickr} & {Reddit} & {Yelp} & Amazon\\
        \midrule
        \midrule
        \multirow{4}{*}{GraphSAGE} & 256 & 0.600 & 0.474 & 0.934 & 0.563 & 0.428\\
         & 512\footnotemark[1] & 0.637 & 0.501 & 0.953 & 0.634 & 0.758\\
         & 1024 & 0.610 & 0.482 & 0.935 & 0.632 & 0.705\\
         & 2048 & 0.625 & 0.374 & 0.936 & 0.563 & 0.447\\
         \midrule
        \multirow{3}{*}{FastGCN} & 400\footnotemark[1] & 0.513 & 0.504 & 0.924 & 0.265 & 0.174\\
        & 2000 & 0.561 & 0.506  & 0.934 & 0.255 & 0.196\\
        & 4000 & 0.564 & 0.507  & 0.934 & 0.260 & 0.195\\
        \midrule
        \multirow{5}{*}{S-GCN} & 500 & 0.519 & 0.462 & ---\footnotemark[5] & ---\footnotemark[5] & ---\footnotemark[3]\\
         & 1000\footnotemark[1] & 0.963 & 0.482 & 0.964 & 0.640 & ---\footnotemark[3] \\
         & 2000 & 0.646 & 0.482 & 0.949 & 0.614 & ---\footnotemark[3] \\
         & 4000 & 0.804 & 0.482 & 0.949 & 0.594 & ---\footnotemark[3] \\
         & 8000 & 0.694 & 0.481 & 0.950 & 0.613 & ---\footnotemark[3] \\
        \midrule
        \multirow{4}{*}{AS-GCN} & 256 & 0.682 & 0.504 & 0.950 & ---\footnotemark[3] & ---\footnotemark[3] \\
         & 512\footnotemark[1] & 0.687 & 0.504 & 0.958 & ---\footnotemark[3] & ---\footnotemark[3]\\
         & 1024 & 0.687 & 0.502 & 0.951 & ---\footnotemark[3] & ---\footnotemark[3]\\
         & 2048 & 0.670 & 0.502 & 0.952 & ---\footnotemark[3] & ---\footnotemark[3]\\
        \midrule
        \multirow{8}{*}{ClusterGCN} & 500 & 0.875 & 0.481 & 0.942 & 0.604 & 0.752\\
         & 1000 & 0.831 & 0.478 & 0.947 & 0.602 & 0.756 \\
         & 1500 & 0.865 & 0.480 & 0.954 & 0.602 & 0.752\\
         & 2000 & 0.828 & 0.469 & 0.954 & 0.609 & 0.759\\
         & 2500 & 0.849 & 0.476 & 0.954 & 0.598 & 0.745\\
         & 3000 & 0.840 & 0.473 & 0.954 & 0.607 & 0.754\\
         & 3500 & 0.846 & 0.473 & 0.952 & 0.602 & 0.754\\
         & 4000 & 0.853 & 0.472 & 0.949 & 0.605 & 0.756\\
        \bottomrule
    \end{tabular}
    \label{tab: exp-fastgcn-batch}
\end{table*}

\footnotetext[1]{Default batch size}
\footnotetext[5]{The training does not converge.}
\footnotetext[3]{The codes throw runtime error on the large datasets (Yelp or Amazon).}


Table \ref{tab: exp-fastgcn-batch} shows the change of test set accuracy with batch sizes.
For each row of Table \ref{tab: exp-fastgcn-batch}, we fix the batch size, tune the other hyperparameters according to Appendix \ref{appendix: config}, and report the highest test set accuracy achieved. 
For GraphSAGE, S-GCN and AS-GCN, their default batch sizes (512,1000 and 512, respectively) lead to the highest accuracy on all datasets. For FastGCN, increasing the default batch size (from 400 to 4000) leads to noticeable accuracy improvement. For ClusterGCN, different datasets correspond to different optimal batch sizes. Note that the accuracy in Section \ref{sec: sota} is already tuned by identifying the optimal batch size on a per graph basis.

For FastGCN, intuitively, increasing batch size may help with accuracy improvement since the minibatches may become better connected. 
Such intuition is verified by the rows of 400 and 2000. However, increasing the batch size from 2000 to 4000 does not further improve accuracy significantly. 
For ClusterGCN, the optimal batch size depends on the cluster structure of the training graph. For PPI, small batches are better, while for Amazon, batch size does not have significant impact on accuracy. 
For GraphSAGE, overly large batches may have negative impact on accuracy due to neighbor explosion. Approximately, GraphSAGE expand $10\times$ more neighbors per layer. For a 2-layer GCN, a size $2\times 10^3$ minibatch would then require the support of $2\times 10^5$ nodes from the input layer. Note that the full training graph size of Reddit is just around $1.5\times 10^5$. Thus, no matter which nodes are sampled in the output layer, GraphSAGE would almost always propagate features within the full training graph for initial layers. We suspect this would lead to difficulties in learning.
For S-GCN, with batch size of 500, it fails to learn properly on Reddit and Yelp. The accuracy fluctuates in a region of very low value, even after appropriate hyperparameter tuning. 
For AS-GCN, its accuracy is not sensitive to the batch size, since AS-GCN addresses neighbor explosion and also ensures good inter-layer connectivity within the minibatch. 

\end{document}